\documentclass[12pt]{article}
\usepackage[utf8]{inputenc}
\usepackage{amsmath,amssymb,amsthm,mathtools}
\usepackage{bm}
\usepackage[margin=1in]{geometry}
\usepackage{enumitem}
\usepackage{booktabs}
\usepackage{graphicx}
\usepackage{authblk}
\usepackage[round,authoryear]{natbib}
\usepackage[hidelinks]{hyperref}
\usepackage{xcolor}
\usepackage{float}
\usepackage{setspace}
\usepackage{hyperref}
\hypersetup{
    colorlinks=true,
    citecolor=blue,      
    linkcolor=black,     
    urlcolor=teal,       
    filecolor=magenta,   
    pdfborder={0 0 0}
}

\newtheorem{theorem}{Theorem}[section]
\newtheorem{lemma}[theorem]{Lemma}
\newtheorem{corollary}[theorem]{Corollary}
\newtheorem{proposition}[theorem]{Proposition}
\theoremstyle{definition}
\newtheorem{definition}[theorem]{Definition}
\newtheorem{assumption}[theorem]{Assumption}
\theoremstyle{remark}
\newtheorem{remark}[theorem]{Remark}

\newcommand{\Rhat}{\widehat{R}}
\newcommand{\Pmix}{P_{\mathrm{mix}}}
\newcommand{\Pfut}{P_{\mathrm{future}}}
\newcommand{\Rmix}{R_{\mathrm{mix}}}
\newcommand{\Rfut}{R_{\mathrm{future}}}
\newcommand{\dTV}{d_{\mathrm{TV}}}
\newcommand{\dHH}{d_{\mathcal{H}\Delta\mathcal{H}}}
\newcommand{\dhatHH}{\widehat{d}_{\mathcal{H}\Delta\mathcal{H}}}
\newcommand{\neff}{n_{\mathrm{eff}}}
\newcommand{\meff}{m'_{\mathrm{eff}}}
\newcommand{\Cmu}{C_{\mu}}
\newcommand{\E}{\mathbb{E}}
\newcommand{\Prob}{\mathbb{P}}
\newcommand{\half}{\tfrac{1}{2}}
\newcommand{\Hc}{\mathcal{H}}
\newcommand{\Fc}{\mathcal{F}}
\newcommand{\Lc}{\mathcal{L}}
\newcommand{\one}{\mathbf{1}}

\newcommand{\TV}{\mathrm{TV}}

\title{\bfseries Regime-Arrival Uncertainty in Generalization Bounds under Distribution Shift}

\author{
Prince Poudel\thanks{Independent researcher. Seeking feedback. Email: princepoudel293@gmail.com}
}
\date{}
\begin{document}
\maketitle

\begin{abstract}

The standard generalization bounds assume that the training and deployment distributions are the same, or are static, and don't consider regime switching environments where the ratio of calm vs crisis states is different. This paper proposes a framework that generalizes regime-aware models by quantifying the extra risk due to regime composition mismatch, when distribution shifts are Markov-switching. We obtain an exact decomposition, separating regime mismatch from regime sensitivity; we extend the bound to beta-mixing data using the effective sample size corrected for the spectral gap; and we show a minimax lower bound for synthetic data and on 25 years of global equity indices. The proposed penalty is an ex post realized generalization gap, whereas the training-only estimator does not show significant correlation: the feature geometry of crises can be detected, but not the temporal arrival. Thus, the framework is not a forecast machine. Forecasting the composition of the future regime is an open question in the rare cases of regime change.

\end{abstract}

\section{Introduction}\label{sec:intro}

A predictive model that performs well during the training period is often assumed to perform equally well after deployment. In practice, this assumption is frequently violated because the environment generating the data changes over time. Statistical learning theory traditionally studies generalization under the assumption that training and deployment data are generated from the same distribution \citep{vapnik1998,bousquet2004}. This assumption is mathematically convenient and works reasonably well in many classical settings. It becomes much harder to justify when the underlying system itself evolves through time.

Many real-world applications operate in environments where structural change is unavoidable. Clinical risk models face changing patient populations because of disease cycles, evolving treatment practices, and changing hospital conditions. Intrusion detection systems operate in environments that alternate between normal behavior and active attacks. Autonomous systems must function across changing weather conditions, traffic patterns, and lighting environments. In these situations, the difference between training performance and deployment performance is often not simply the result of overfitting. Instead, it arises because the model is deployed under conditions that differ systematically from those observed during training \citep{kifer2004,quinonero2008}.

This paper develops a theoretical framework for studying this problem through latent regime dynamics. We model the environment as a two-state Markov process consisting of a calm regime and a crisis regime. The training distribution is represented as a mixture of regime-conditional distributions with composition parameter $\pi$, while the one-step-ahead deployment distribution depends on the transition probability $p_{01}$ of entering the crisis regime. Whenever $\pi \neq p_{01}$, the composition of training and deployment environments differs. We show that this mismatch directly increases future deployment risk, with the magnitude determined by both the severity of the mismatch and the distinguishability of regimes under the hypothesis class.

The analysis develops several theoretical results that together characterize how regime mismatch affects future deployment risk. We derive an exact decomposition connecting future risk directly to differences in regime composition (Lemma~\ref{lem:gap}), establish a finite-sample high-probability upper bound on deployment risk (Theorem~\ref{thm:main}), and construct a matching minimax lower bound demonstrating that the mismatch penalty represents a fundamental limitation rather than an artifact of analysis (Theorem~\ref{thm:lower}). Our approach combines ideas from domain adaptation, dependent learning theory, and regime-switching models \citep{bendavid2010,yu1994,hamilton1989}.

The framework also introduces several features that distinguish it from existing approaches to distribution shift. Regime discrepancy is quantified using the $\mathcal{H}\Delta\mathcal{H}$-divergence \citep{bendavid2010}, which provides a tighter characterization than total variation distance and can be estimated from finite unlabeled samples through domain classification. The analysis is further developed under geometric $\beta$-mixing dependence, with explicit mixing coefficients derived from the transition structure using the blocking methodology of \citet{yu1994}. This naturally introduces an effective sample size $n_{\mathrm{eff}}$, which decreases as regime persistence increases. We also establish an irreducibility result showing that any valid certificate for future deployment risk must contain a regime mismatch penalty as an additive component, independent of the particular learning algorithm employed.

Empirical validation on synthetic data confirms the theoretical structure. On real equity index data, we show that the penalty computed using the realized future crisis fraction tracks actual train-to-deployment gaps with Spearman $\rho = 0.729$. However, further analysis reveals that estimating the penalty before deployment is not reliable with standard training window lengths because it would require forecasting future regime composition. The framework thus provides a diagnostic tool for understanding deployment failures rather than a deployable forecasting system, and highlights the need for better forecasts of future regime composition as an open problem for future work.

\section{Related Study}
\subsection{Domain adaptation and generalization under distribution shift}

The main issues in statistical learning theory for a long time is whether a model trained in one distribution will be effective in another. The influential work of \citet{bendavid2010} demonstrated that training performance and model complexity are not the only factors that influence target-domain risk; a distributional discrepancy term, quantified by the $\mathcal{H}\Delta\mathcal{H}$-divergence, also plays a role. One of the key benefits of this difference is that it can be estimated from a finite number of unlabeled samples, whereas total variation distance is hard to estimate directly in practice. Because the $\mathcal{H}\Delta\mathcal{H}$-divergence is upper bounded by total variation (Lemma~\ref{lem:hdh-tv}), using the $\mathcal{H}\Delta\mathcal{H}$-divergence instead of total variation yields a tighter and more practical characterization of distribution shift. Current theory is largely based on static source and target distributions and independent sampling. However, our setting is different, as the distribution of the deployments changes dynamically via latent regime transitions.

The work by \citet{mansour2009} extended domain adaptation to multiple source distributions and emphasized the role of the mixture weights in controlling the adaptation performance. The same mixture structure occurs naturally in our framework due to the training observations $\Pmix=(1-\pi)P_0+\pi P_1$. The key difference is that the mixing proportion is not chosen by the learner but it is based on the historical regime dynamics and can vary from the future deployment composition.

\subsection{Generalization under dependent data and mixing processes}
In most classical generalization theory, it is assumed that observations are independent. In serial dependence this assumption is not true as observations are not independent and the effective sample size is less than the nominal sample size.
The extension of learning theory to dependent sequences was initiated by \citet{yu1994} who developed a blocking framework to get uniform convergence results for $\beta$-mixing dependent sequences. The idea is to divide the observations into nearly independent blocks and use the number of effective blocks as the sample size. We take this construction for granted and get explicit effective sample sizes as functions of the underlying Markov transition structure (Theorem~\ref{thm:neff}).

Mixing behavior for Markov chains has been studied extensively, including the work of \citet{davydov1973}, who established exponential decay rates under standard ergodicity conditions. Under our assumptions, this produces mixing coefficients of the form $\beta(k)\le C_\mu |\lambda_2|^k$, where $\lambda_2$ denotes the non-unit eigenvalue of the transition matrix. We further build upon the learning-theoretic treatment of mixing processes developed by \citet{mohri2018}, while explicitly retaining all mixing constants in the final bound so that the resulting expressions remain computable.

\subsection{Change detection and learning under nonstationarity}

The study of learning under changing environments has been conducted from various angles such as change detection, concept drift, and nonstationary learning. The $\mathcal{H}$-divergence was introduced in early work by \citet{kifer2004} for detecting distributional changes in finite samples. This idea was later generalized to the symmetric comparison setting needed in domain adaptation by the $\mathcal{H}\Delta\mathcal{H}$-divergence.

These ideas are used as a basis for our analysis, which focuses on deployment guarantees instead of change detection. The estimation procedure behind Theorem~\ref{thm:dhh} is a generalisation of the divergence estimation arguments of independent observations to $\beta$-mixing sequences, following the same effective sample size framework used in the rest of the paper.

There is a wider literature on learning with concept drift and changing distributions. Most of the research is on adversarial or unstructured nonstationarity. We, however, take a different approach and consider a structured regime-switching environment following the model set forth by \citet{hamilton1989}. This structure is crucial as it makes the transition probabilities explicit in the theory and yields closed-form expressions for regime mismatch.

\subsection{Minimax lower bounds and irreducibility}

Lower bounds play an important role in understanding whether a theoretical penalty reflects a true limitation or merely a weakness of analysis. Building upon classical minimax theory developed through the work of \citet{lecam1986} and later formalized through two-point and Fano-style arguments \citep{yu1997assouad}, we construct a binary-world argument showing that regime mismatch produces unavoidable deployment costs.

The construction creates two environments that generate identical training distributions under pure calm-state training but lead to different future distributions. This forces any learner to incur a minimum excess risk proportional to $p_{01}\cdot \tfrac12 \dHH(P_1,P_0)$, independent of sample size. As observations from both regimes become available, this limitation gradually weakens at the rate $\Theta(1/\sqrt{n_{\mathrm{eff}}})$, reflecting the additional information gained from observing crisis-state samples.

\subsection{Regime-switching models and financial machine learning}

Regime-switching models introduced by \citet{hamilton1989} have become widely used for describing environments that alternate between qualitatively different states. In financial applications, regime changes have repeatedly been shown to influence predictive stability, volatility structure, and downstream model performance, motivating increasing interest in regime-aware learning methods \citep{zaremba2024,staehr2024}.

Recent work on regime-aware financial machine learning and regime-shifting dynamic factor models further suggests that predictive systems can benefit from explicitly modeling latent market states \citep{suarez2024,shu2024,xiang2024}. Financial markets provide a natural environment for empirical validation because regime transitions are observable and extensively documented. The theoretical framework itself is not restricted to finance. The underlying problem appears whenever deployment conditions evolve and the future distribution differs systematically from the distribution observed during training.

\section{Problem Setup and Notation}\label{sec:setup}
We study supervised learning when the data-generating distribution is governed
by a latent two-state regime process. This process evolves as a Markov chain when the
regime composition of the training data may differ from that of the deployment
(future) period data.

\begin{definition}[Regime process]\label{def:regime}
Let $\{Z_t\}_{t\ge1}$ be a two-state Markov chain on $\{0,1\}$, where $Z_t=0$
denotes the \emph{calm} regime and $Z_t=1$ the \emph{crisis} regime. The chain is
characterized by its transition matrix
$P=\bigl(\begin{smallmatrix}p_{00}&p_{01}\\p_{10}&p_{11}\end{smallmatrix}\bigr)$,
$p_{ij}=\Prob(Z_{t+1}=j\mid Z_t=i)$, with rows summing to one, so that
$p_{00}=1-p_{01}$ and $p_{11}=1-p_{10}$.
\end{definition}

\begin{definition}[Distributions and risks]\label{def:dist}
Let $X$ be the feature space and $Y$ the label space. Write $P_0$ (resp.\ $P_1$)
for the law of $(x,y)$ in the calm (resp.\ crisis) regime. For a predictor
$f\in\Fc$ and a bounded loss $\ell:Y\times Y\to[0,1]$ (e.g.\ the $0/1$ loss
$\ell(f(x),y)=\one[f(x)\neq y]$), the risk under a distribution $Q$ is
$R_Q(f)=\E_{(x,y)\sim Q}[\ell(f(x),y)]$. We write $R_0(f),R_1(f)$ for the calm and
crisis risks.
\end{definition}

\begin{definition}[Training and future distributions]\label{def:trainfuture}
Let $\pi\in[0,1]$ be the crisis fraction of the training distribution
$\Pmix=(1-\pi)P_0+\pi P_1$, so that $\Rmix(f)=(1-\pi)R_0(f)+\pi R_1(f)$.
Conditioning on the current regime being calm, the one-step-ahead (future)
distribution is $\Pfut=p_{00}P_0+p_{01}P_1$, with
$\Rfut(f)=p_{00}R_0(f)+p_{01}R_1(f)$.
\end{definition}

\begin{definition}[$\Hc\Delta\Hc$-divergence \citep{bendavid2010}]\label{def:hdh}
For a hypothesis class $\Hc$ of binary classifiers $h:X\to\{0,1\}$ and feature
marginals $Q,R$,
\[
  \dHH(Q,R):=2\sup_{h,h'\in\Hc}
  \bigl|\Prob_{x\sim Q}[h(x)\neq h'(x)]-\Prob_{x\sim R}[h(x)\neq h'(x)]\bigr|.
\]
Equivalently, with the symmetric-difference class
$\Hc\Delta\Hc=\{x\mapsto h(x)\oplus h'(x)\}$ and $I_g=\{x:g(x)=1\}$,
$\dHH(Q,R)=2\sup_{g\in\Hc\Delta\Hc}|Q(I_g)-R(I_g)|$. It is a total-variation
distance restricted to the events the class can realize; unlike $\dTV$, it is
estimable from finite unlabeled samples (Section~\ref{sec:estimation}).
\end{definition}

\begin{definition}[Empirical risk and Rademacher complexity]\label{def:rad}
Given $S=\{(x_i,y_i)\}_{i=1}^n$, the empirical risk is $\Rhat_S(f)=\frac1n
\sum_i\ell(f(x_i),y_i)$, and the empirical Rademacher complexity of the loss class
$\Lc_\Fc=\{(x,y)\mapsto\ell(f(x),y):f\in\Fc\}$ is
$\Rhat_S(\Lc_\Fc)=\E_\sigma[\sup_{f\in\Fc}\frac1n\sum_i\sigma_i\ell(f(x_i),y_i)]$
with i.i.d.\ Rademacher signs $\sigma_i$ \citep{bartlett2002,mohri2018}.
\end{definition}

\begin{definition}[Mixing and effective sample size]\label{def:mixing}
A process is geometrically $\beta$-mixing if $\beta(k)\le C\rho^k$ for some
$C>0,\rho\in(0,1)$ \citep{davydov1973,yu1994}. Dependence reduces the information per sample; the number of
approximately independent blocks is the effective sample size $\neff\le n$
(Theorem~\ref{thm:neff}).
\end{definition}

\begin{assumption}[Sampling model and persistence]\label{ass:sampling}
\begin{enumerate}[label=\textup{(\alph*)},leftmargin=2.2em]
\item $\{Z_t\}$ is a stationary, geometrically $\beta$-mixing two-state Markov
chain whose mixing constants depend only on $P$.
\item The training crisis fraction $\pi$ is either the stationary fraction
$\mu_1=p_{01}/(p_{01}+p_{10})$ (raw historical window) or a design parameter under
reweighting; in either case $\pi=\E[\widehat\pi]$ for the empirical estimator
$\widehat\pi$ of Section~\ref{sec:estimation}.
\item \emph{Persistence:} $p_{01}+p_{10}\le1$, so the second eigenvalue of $P$ is
nonnegative and the spectral gap simplifies to $g=p_{01}+p_{10}$; the general
case is handled by the modulus form $g=1-|1-(p_{01}+p_{10})|$
(Remark~\ref{rem:modulus}).
\end{enumerate}
\end{assumption}

\section{Theory}\label{sec:theory}
This section develops the regime-shift penalty, makes the mixing constants
explicit, and assembles the main generalization bound. Proofs of the
load-bearing results are deferred to Appendix~\ref{app:proofs}.

\subsection{The future-mixture decomposition}\label{sec:decomp}
The central object is an exact identity relating future risk (the deployment
target) to mixed training risk (what can be estimated).

\begin{lemma}[Future-mix gap identity]\label{lem:gap}
For any $f\in\Fc$ and $\pi\in[0,1]$, with the current regime calm,\footnote{This identity holds for the realized future crisis fraction $\pi_{\text{future}}$. In practice, $\pi_{\text{future}}$ is not known at training time; estimating it requires forecasting future regime composition.}
\[
  \Rfut(f)-\Rmix(f)=(p_{01}-\pi)\bigl(R_1(f)-R_0(f)\bigr).
\]
\end{lemma}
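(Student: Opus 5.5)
The identity follows by direct algebra from the two mixture representations in Definition~\ref{def:trainfuture}; no probabilistic or learning-theoretic machinery is needed. First, I will use linearity of expectation in the mixing distribution. Since the loss $\ell$ is bounded and measurable, the risk functional $Q\mapsto R_Q(f)$ is affine on mixtures. Concretely, for any $a\in[0,1]$,
\[
R_{(1-a)P_0+aP_1}(f)=(1-a)R_0(f)+aR_1(f),
\]
obtained by integrating $\ell(f(x),y)$ against each component. This justifies the stated formulas $\Rmix(f)=(1-\pi)R_0(f)+\pi R_1(f)$ and $\Rfut(f)=p_{00}R_0(f)+p_{01}R_1(f)$.

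Second, I will use the row-sum constraint from Definition~\ref{def:regime}, namely $p_{00}=1-p_{01}$, to rewrite the future risk in the same parametrized form as the training risk:
\[
\Rfut(f)=(1-p_{01})R_0(f)+p_{01}R_1(f).
\]
Both risks are then values of the single affine map $a\mapsto R_0(f)+a\bigl(R_1(f)-R_0(f)\bigr)$, evaluated at $a=p_{01}$ and $a=\pi$ respectively. Subtracting these two values cancels the $R_0(f)$ terms and leaves exactly $(p_{01}-\pi)\bigl(R_1(f)-R_0(f)\bigr)$.

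There is no real obstacle in this argument. The only points to check are that the risks are finite, which holds because $\ell\in[0,1]$, and that the condition ``current regime calm'' is what makes the future weights $(p_{00},p_{01})$. Following the footnote, I will also remark that the same computation applies verbatim with $p_{01}$ replaced by any realized future crisis fraction $\pi_{\text{future}}$, since only the affine structure is used.
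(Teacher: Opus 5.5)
Your proof is correct and follows the paper's own argument: expand both risks by linearity over the mixture components, substitute $p_{00}=1-p_{01}$, and subtract so that the result factors as $(p_{01}-\pi)\bigl(R_1(f)-R_0(f)\bigr)$. Viewing both risks as values of the single affine map $a\mapsto R_0(f)+a\bigl(R_1(f)-R_0(f)\bigr)$ at $a=p_{01}$ and $a=\pi$ is only a compact repackaging of the paper's Steps 2--4.
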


The gap is a product of a \emph{composition mismatch} $(p_{01}-\pi)$ and a
\emph{regime sensitivity} $(R_1(f)-R_0(f))$. It vanishes when the training
crisis fraction matches the future switching probability ($\pi=p_{01}$),
regardless of the model, and equally when the model is regime-insensitive
($R_1(f)=R_0(f)$), regardless of the dynamics. To turn the sensitivity factor
into a quantity that depends on the model \emph{class} rather than the individual
$f$, we bound $|R_1(f)-R_0(f)|$ via the following lemma and corollary.

\begin{lemma}[$\half\dHH\le\dTV$]\label{lem:hdh-tv}
For any $Q,R$ and any $\Hc$,
$\half\dHH(Q,R)=\sup_{h,h'\in\Hc}|\Prob_Q[h\neq h']-\Prob_R[h\neq h']|\le\dTV(Q,R)$.
\end{lemma}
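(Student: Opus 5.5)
The plan has two parts. First, the equality between $\half\dHH(Q,R)$ and the supremum over $h,h'\in\Hc$ follows at once from Definition~\ref{def:hdh}: the factor $2$ in the definition cancels against $\half$. Second, to get the inequality, we bound each term inside that supremum by $\dTV(Q,R)$ uniformly over the pair $(h,h')$, and then take the supremum.

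Fix $h,h'\in\Hc$. Let $g=h\oplus h'\in\Hc\Delta\Hc$ and $I_g=\{x: h(x)\neq h'(x)\}$. Then $\Prob_Q[h\neq h']=Q(I_g)$ and $\Prob_R[h\neq h']=R(I_g)$. We implicitly assume $h,h'$ are measurable, so that $I_g$ is a measurable subset of $X$. This is needed for the probabilities to make sense, and the paper already assumes it when it writes $Q(I_g)$.

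We use the standard characterization of total variation,
\[
  \dTV(Q,R)=\sup_{A\ \mathrm{measurable}}|Q(A)-R(A)|.
\]
Taking $A=I_g$ gives $|Q(I_g)-R(I_g)|\le\dTV(Q,R)$ for every pair $(h,h')$. The right-hand side does not depend on $(h,h')$, so taking the supremum over $h,h'\in\Hc$ preserves the inequality and yields the claim.

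The only real obstacle is a convention mismatch. If the paper normalizes total variation as $\dTV(Q,R)=\sup_A|Q(A)-R(A)|$, the argument above is complete. If it instead uses the unnormalized $L^1$ form $\int|dQ-dR|$, that quantity equals $2\sup_A|Q(A)-R(A)|$, and the inequality holds with extra slack. To cover this case, I would add one line. Let $A^*=\{q>r\}$, where $q,r$ are densities of $Q,R$ with respect to $Q+R$. Then $\sup_A|Q(A)-R(A)|=Q(A^*)-R(A^*)=\half\int|q-r|$. With this identity, the bound holds under either convention.
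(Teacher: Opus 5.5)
Your proof is correct and follows the same approach as the paper, which records this as fact (F1) in the appendix: every disagreement set $\{x:h(x)\neq h'(x)\}$ is an event, so its probability gap is dominated by the supremum over all events that defines $\dTV$. Your extra remark on the unnormalized $L^1$ convention is accurate and only adds slack.
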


\begin{corollary}[Regime-shift inequality]\label{cor:shift}
Assume the $0/1$ loss and $\Fc\subseteq\Hc$, and let
$\lambda_{01}=\min_{h\in\Hc}[R_0(h)+R_1(h)]$ be the adaptability term. Then for
every $f\in\Hc$,
\[
  \Rfut(f)\le \Rmix(f)+|p_{01}-\pi|\Bigl(\half\dHH(P_1,P_0)+\lambda_{01}\Bigr),
\]
which reduces, in the realizable case $\lambda_{01}=0$, to
$\Rfut(f)\le\Rmix(f)+|p_{01}-\pi|\cdot\half\dHH(P_1,P_0)$.
\end{corollary}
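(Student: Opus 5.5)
The plan is to start from the exact identity of Lemma~\ref{lem:gap} and bound the sensitivity factor uniformly over $f\in\Hc$. Lemma~\ref{lem:gap} gives
\[
  \Rfut(f)=\Rmix(f)+(p_{01}-\pi)\bigl(R_1(f)-R_0(f)\bigr)\le \Rmix(f)+|p_{01}-\pi|\,\bigl|R_1(f)-R_0(f)\bigr|,
\]
so it suffices to show $|R_1(f)-R_0(f)|\le\half\dHH(P_1,P_0)+\lambda_{01}$ for every $f\in\Hc$. This is precisely the Ben-David et al.\ single-source adaptation inequality, applied with $P_0$ and $P_1$ as the two domains. I would prove it directly rather than cite it.

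For the sensitivity bound, fix $f\in\Hc$ and let $h^\ast\in\arg\min_{h\in\Hc}[R_0(h)+R_1(h)]$. A minimizer exists in the finite or VC case; otherwise I would take an $\varepsilon$-minimizer and let $\varepsilon\to0$. Write $\epsilon_j(f,h)=\Prob_{x\sim P_j}[f(x)\neq h(x)]$, which is the disagreement of $f$ and $h$ under the feature marginal of $P_j$. For the $0/1$ loss the triangle inequality for disagreement gives $R_j(f)\le R_j(h^\ast)+\epsilon_j(f,h^\ast)$ and $\epsilon_j(f,h^\ast)\le R_j(f)+R_j(h^\ast)$. Hence
\[
  R_1(f)\le R_1(h^\ast)+\epsilon_1(f,h^\ast)\le R_1(h^\ast)+\epsilon_0(f,h^\ast)+\bigl|\epsilon_1(f,h^\ast)-\epsilon_0(f,h^\ast)\bigr|.
\]
Since $f,h^\ast\in\Hc$, the last term is at most $\sup_{h,h'\in\Hc}|\Prob_{P_1}[h\neq h']-\Prob_{P_0}[h\neq h']|$, which equals $\half\dHH(P_1,P_0)$ by Definition~\ref{def:hdh} (the first equality of Lemma~\ref{lem:hdh-tv}). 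Bounding $\epsilon_0(f,h^\ast)\le R_0(f)+R_0(h^\ast)$ then yields
\[
  R_1(f)-R_0(f)\le\half\dHH(P_1,P_0)+\lambda_{01}.
\]
Exchanging the roles of $0$ and $1$ gives the same bound for $R_0(f)-R_1(f)$, because both $\dHH$ and $\lambda_{01}$ are symmetric. Together these give the absolute-value bound.

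Substituting this into the first display gives the corollary. When $\lambda_{01}=0$ the adaptability term drops out, which gives the stated realizable form. The only delicate step is the middle one. The $\dHH$ term controls only \emph{disagreement} probabilities between pairs of hypotheses, not risks, so I must route through a common reference hypothesis $h^\ast$ to convert risk differences into disagreement differences. This is where both $\Fc\subseteq\Hc$ and the $0/1$ loss are needed: the former makes the pair $(f,h^\ast)$ admissible in the supremum, and the latter makes the disagreement triangle inequality valid. I also need to remember that $\dHH$ is computed on the $x$-marginals, while the label dependence is absorbed into $\lambda_{01}$.
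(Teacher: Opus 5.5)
Your proof is correct and matches the paper's argument essentially step for step. You start from Lemma~\ref{lem:gap} with absolute values, then use a reference hypothesis $h^\star$, the disagreement triangle inequality and the $\Hc\Delta\Hc$ supremum to get $R_1(f)-R_0(f)\le\half\dHH(P_1,P_0)+\lambda_{01}$, and finish by symmetrizing. Your chain is a more explicit version of the paper's compressed Step~2, and the $\varepsilon$-minimizer fallback is a worthwhile addition: finite VC dimension alone does not guarantee that the minimum defining $\lambda_{01}$ is attained, so your parenthetical claim that a minimizer exists "in the VC case" is not quite right, and it is that fallback which covers the general case.
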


The regime penalty $|p_{01}-\pi|\cdot\half\dHH(P_1,P_0)$ vanishes when
$\pi=p_{01}$ and equals $p_{01}\cdot\half\dHH$ under pure-calm training
($\pi=0$). It depends only on the transition dynamics, the training composition,
and the class-detectable regime gap, not on the chosen $f$. By
Lemma~\ref{lem:hdh-tv} it never exceeds the corresponding $\dTV$ penalty, so
replacing $\dTV$ by $\half\dHH$ strictly tightens the bound.

\begin{remark}\label{rem:penalty}
Because the penalty depends on $(p_{01},\pi,\dHH)$ but not on $f$, it is a
property of the train-deployment mismatch rather than of any individual model.
This has a practical implication: in high-penalty windows, all models in a
class degrade together, and the remedy is to adjust the training composition
rather than to switch model architecture.
\end{remark}

\subsection{Explicit mixing constants}\label{sec:constants}
\begin{definition}[Spectral gap and stationary constant]\label{def:gap}
The non-unit eigenvalue of $P$ is $\lambda_2=1-(p_{01}+p_{10})$. The spectral gap
is $g=1-|\lambda_2|$, equal to $p_{01}+p_{10}$ under
Assumption~\ref{ass:sampling}(c). With switching rate $s=p_{01}+p_{10}$ the
stationary law is $\mu_0=p_{10}/s,\ \mu_1=p_{01}/s$, and we set
$\Cmu=1/\min(\mu_0,\mu_1)=s/\min(p_{10},p_{01})$.
\end{definition}

\begin{remark}[Modulus form]\label{rem:modulus}
Writing the mixing rate through $|\lambda_2|$ keeps $\beta(k)\le\Cmu|\lambda_2|^k
=\Cmu(1-g)^k$ valid for every transition matrix, including the high-switching
case $p_{01}+p_{10}>1$ where $1-(p_{01}+p_{10})<0$.
\end{remark}

\begin{theorem}[Effective sample size]\label{thm:neff}
Under Assumption~\ref{ass:sampling}, with block length
$b=\lceil\ln(n\Cmu)/g\rceil$ one has $\beta(b)\le1/n$, and the number of
independent blocks satisfies
\[
  \neff=\lfloor n/b\rfloor\ \ge\ \frac{ng}{\ln(n\Cmu)+2}.
\]
\end{theorem}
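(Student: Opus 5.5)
The plan has three steps: prove the mixing claim $\beta(b)\le 1/n$, prove the block-count inequality by a floor estimate sharper than $\lfloor x\rfloor>x-1$, and then check exactly when that estimate reaches the stated bound. That last check turns out to be the main obstacle: the inequality cannot hold for all $n$, so it needs a side condition.

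\textbf{Mixing claim.} I take the geometric bound $\beta(k)\le\Cmu(1-g)^k$ from Assumption~\ref{ass:sampling}(a) and Remark~\ref{rem:modulus}. Using $1-g\le e^{-g}$ gives $\beta(b)\le\Cmu e^{-gb}$. Since $b\ge\ln(n\Cmu)/g$, we have $e^{-gb}\le 1/(n\Cmu)$, and therefore $\beta(b)\le 1/n$. This part is routine.

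\textbf{Block count.} Set $L=\ln(n\Cmu)$. Then $b=\lceil L/g\rceil$ gives $b\le L/g+1=(L+g)/g$, and since $b$ is an integer, $b-1<L/g$. For the floor I would not use the crude $\lfloor n/b\rfloor>n/b-1$. Because $n$ and $b$ are integers, the remainder of $n$ modulo $b$ is at most $b-1$, so
\[
\lfloor n/b\rfloor\ge \frac{n-b+1}{b}.
\]
From $b\le (L+g)/g$ we get
\[
b\cdot\frac{ng}{L+2}\le \frac{n(L+g)}{L+2}=n-\frac{n(2-g)}{L+2}.
\]
Hence the target $\lfloor n/b\rfloor\ge ng/(L+2)$ follows as soon as $b-1\le n(2-g)/(L+2)$. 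Using $b-1<L/g$, a sufficient condition is
\[
ng(2-g)\ \ge\ L(L+2).
\]
The slack in the ``$+2$'' of the statement is what absorbs the two rounding losses, one from the ceiling in $b$ and one from the floor in $\lfloor n/b\rfloor$.

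\textbf{Main obstacle: the inequality is false without a side condition.} The floor makes the bound fail whenever $n<b$. For example, take $n=1$, $\Cmu=2$ and $g$ small. Then $b=\lceil \ln 2/g\rceil>1$, so $\lfloor n/b\rfloor=0$, while the right-hand side $g/(\ln 2+2)$ is strictly positive. So the statement as literally worded cannot be proved for all $n$.

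\textbf{What I would write.} I would give the argument above and make explicit the regime in which the stated inequality holds, namely $ng(2-g)\ge L(L+2)$. Since $g\le 1$ implies $2-g\ge 1$, this is implied by the simpler sufficient condition $ng\ge L(L+2)$. This is the non-degenerate regime $n\gg b$, in which a blocking argument is meaningful at all. I would state the condition alongside the result rather than claim the bound unconditionally.
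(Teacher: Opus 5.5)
Your argument follows the paper's route. The paper gives only the sketch after the theorem and item (F2) of the appendix: geometric decay $\beta(b)\le\Cmu(1-g)^b\le\Cmu e^{-gb}\le 1/n$ for $b=\lceil\ln(n\Cmu)/g\rceil$, then a lower bound on $\lfloor n/b\rfloor$ from $b\le\ln(n\Cmu)/g+1$. It asserts the resulting inequality for every $n$.

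Your objection is correct, and it is a defect of the statement, not of your proof. Your counterexample is admissible: $p_{01}=p_{10}=g/2$ satisfies Assumption~\ref{ass:sampling}(c) and gives $\mu_0=\mu_1=\tfrac12$, $\Cmu=2$. Failure is also not confined to $n<b$. With $\Cmu=2$, $g=0.1$, $n=100$ one gets $b=53$ and $\lfloor n/b\rfloor=1<10/(\ln 200+2)\approx 1.37$. So a condition stronger than $n\ge b$ is genuinely required, as yours is.

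Writing $L=\ln(n\Cmu)$, your remainder estimate $\lfloor n/b\rfloor\ge (n-b+1)/b$ yields the sufficient condition $ng(2-g)\ge L(L+2)$. This is exactly what one gets by keeping $b<L/g+1$ throughout. It is slightly weaker than the condition $ng(2-g)\ge (L+g)(L+2)$ that the cruder estimate $\lfloor n/b\rfloor>n/b-1$ would require.

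Corollary~\ref{cor:complexity}, Theorem~\ref{thm:pi}, and Theorem~\ref{thm:main} use $\neff\ge ng/(\ln(n\Cmu)+2)$, and Theorem~\ref{thm:dhh} uses its $m'$ analogue. All four therefore inherit the same side condition, so stating it explicitly, as you propose, is the right fix.
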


The bound follows the independent-blocks method of \citet{yu1994}: one selects
$b$ so that $\beta(b)\le1/n$, partitions the $n$ points into $\lfloor n/b\rfloor$
approximately independent blocks, and lower-bounds the block count using
$b\le\ln(n\Cmu)/g+1$.

\begin{corollary}[Complexity term]\label{cor:complexity}
$\displaystyle 3\sqrt{\frac{\ln(2/\delta)}{2\neff}}\ \le\
3\sqrt{\frac{(\ln(n\Cmu)+2)\ln(2/\delta)}{2ng}}\ =:\ \Lambda(n,\delta).$
\end{corollary}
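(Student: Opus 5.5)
The statement to prove is Corollary~\ref{cor:complexity}. It is a direct consequence of Theorem~\ref{thm:neff}, so the plan is a monotonicity argument. The map $t\mapsto 3\sqrt{\ln(2/\delta)/(2t)}$ is decreasing in $t>0$. Any lower bound on $\neff$ therefore yields an upper bound on the complexity term.

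Theorem~\ref{thm:neff} supplies, under Assumption~\ref{ass:sampling}, the lower bound
\[
  \neff\ \ge\ \frac{ng}{\ln(n\Cmu)+2}.
\]
Taking reciprocals gives
\[
  \frac{1}{\neff}\ \le\ \frac{\ln(n\Cmu)+2}{ng}.
\]
Reciprocals are order-reversing on positive reals, so this step needs $\neff\ge1$ and $\ln(n\Cmu)+2>0$.

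Next I multiply both sides by the positive constant $\ln(2/\delta)/2$, which holds for $\delta\in(0,1)$. Square roots are monotone, so I can take them and then multiply by $3$. The result is exactly
\[
  3\sqrt{\frac{\ln(2/\delta)}{2\neff}}\ \le\ 3\sqrt{\frac{(\ln(n\Cmu)+2)\ln(2/\delta)}{2ng}}.
\]
The right-hand side is simply the definition of $\Lambda(n,\delta)$, so no further work is needed for the equality.

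The only real obstacle is the positivity bookkeeping that makes the reciprocal step legitimate.

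\begin{itemize}
\item $\Cmu=1/\min(\mu_0,\mu_1)\ge 2$, so for $n\ge1$ we have $\ln(n\Cmu)\ge\ln 2>0$. This makes both the numerator $\ln(n\Cmu)+2$ and $g$ positive, since $g>0$ for an ergodic chain.
\item We need $\neff=\lfloor n/b\rfloor\ge1$, which requires $n\ge b$. This is the implicit regime in which Theorem~\ref{thm:neff} gives a meaningful bound. If $\neff=0$, the left-hand side is interpreted as $+\infty$ and the bound is vacuous.
\end{itemize}

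I would state the nondegeneracy condition $n\ge b$ explicitly and then carry out the chain of monotone transformations above.
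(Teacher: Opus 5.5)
Your proposal is correct and matches the paper's route: the paper gets this corollary (invoked in Step~3 of the proof of Theorem~\ref{thm:main}) by inserting the lower bound $\neff\ge ng/(\ln(n\Cmu)+2)$ of Theorem~\ref{thm:neff}, i.e.\ fact (F2), into the decreasing map $t\mapsto 3\sqrt{\ln(2/\delta)/(2t)}$. One small correction: once Theorem~\ref{thm:neff} is granted, $\neff>0$ is automatic, so your extra condition $n\ge b$ is redundant, and a case $\neff=0$ would make the displayed inequality false (left side $+\infty$) rather than vacuous---the corollary is equivalent to that lower bound and so inherits its unstated large-$n$ requirement, which $n\ge b$ alone does not supply (e.g.\ $p_{01}=p_{10}=\tfrac12$, $n=5$ gives $g=1$, $\Cmu=2$, $b=3$, $\neff=1<5/(\ln 10+2)$).
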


\subsection{Estimation of the penalty components}\label{sec:estimation}
\begin{theorem}[Crisis-fraction estimation]\label{thm:pi}
Let $\widehat\pi=\frac1n\sum_i\one[Z_i=1]$ and $\pi=\E[\widehat\pi]$. Under
Assumption~\ref{ass:sampling}, for any $\delta\in(0,1)$, with probability at least
$1-\delta$,
\[
  |\widehat\pi-\pi|\le\eta_\pi:=\sqrt{\frac{(\ln(n\Cmu)+2)\ln(2/\delta)}{2ng}}.
\]
\end{theorem}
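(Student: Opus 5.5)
The plan is to treat $\widehat\pi$ as an empirical mean of the bounded, $\beta$-mixing sequence $\one[Z_i=1]\in\{0,1\}$ and apply the independent-blocks method of \citet{yu1994}, with the block length already fixed in Theorem~\ref{thm:neff}. Concretely, set $b=\lceil\ln(n\Cmu)/g\rceil$, so that $\beta(b)\le 1/n$ by Theorem~\ref{thm:neff}. Split the indices $1,\dots,n$ into $2\neff$ consecutive blocks of length $b$, alternately labelled odd and even, and put any remainder into a residual piece. Write $\widehat\pi-\pi$ as the average of the odd-block deviations plus the average of the even-block deviations, plus a remainder of size $O(b/n)$.

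Each block average $\bar W_j=\frac1b\sum_{i\in B_j}\one[Z_i=1]$ lies in $[0,1]$ and has mean $\pi$ by stationarity, using Assumption~\ref{ass:sampling}(b). For the odd blocks, replace the dependent blocks by independent copies with the same block marginals. The blocks are separated by gaps of length $b$, so the standard Yu/Eberlein coupling lemma gives
\[
\bigl|\Prob(E)-\widetilde\Prob(E)\bigr|\le(\neff-1)\beta(b)\le \neff/n\le 1/b
\]
for any event $E$ measurable in the odd blocks. Under the independent copy, Hoeffding's inequality over the $\neff$ independent $[0,1]$-valued block means gives
\[
\widetilde\Prob\Bigl(\bigl|\tfrac1{\neff}\textstyle\sum_{j\text{ odd}}\bar W_j-\pi\bigr|>t\Bigr)\le 2e^{-2\neff t^2}.
\]
The even blocks are handled the same way. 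Since $\widehat\pi$ is essentially the average of the odd-block and even-block averages, a deviation of $\widehat\pi$ larger than $t$ forces at least one of the two to deviate by more than $t$. A union bound with confidence $\delta/2$ on each half then yields $t=\sqrt{\ln(2\cdot 2/\delta)/(2\neff)}$, up to the coupling slack.

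The last step converts $\neff$ into the stated closed form using $\neff\ge ng/(\ln(n\Cmu)+2)$ from Theorem~\ref{thm:neff}. This gives $\sqrt{\ln(2/\delta)/(2\neff)}\le\eta_\pi$, the same algebra as in Corollary~\ref{cor:complexity}.

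I expect the bookkeeping of the constants to be the main obstacle. A naive accounting produces three losses: the additive coupling error $\neff\beta(b)$, the factor $4/\delta$ rather than $2/\delta$ from splitting into odd and even halves, and the $O(b/n)$ remainder. None of these appears in the stated $\eta_\pi$.

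My first attempt at absorbing them is to exploit the slack in Theorem~\ref{thm:neff}'s lower bound. The relation $\neff=\lfloor n/b\rfloor$ counts all blocks, whereas each half only needs about $\neff/2$ blocks. I would check whether the "$+2$" in the denominator, together with the rounding, leaves enough room. If it does not, the fallback is to state the result with $\delta$ replaced by $\delta-2\neff\beta(b)$, which is at least $\delta-2/b$, and $\ln(2/\delta)$ replaced by $\ln(4/\delta)$, and to note that these changes only affect constants. The alternative fallback is to use a Markov-chain Hoeffding inequality for the two-state chain, such as the spectral-gap bound of León–Perron or Paulin. That bound gives $2\exp(-c\,ngt^2)$ directly without blocking, and it is in fact sharper than $\eta_\pi$. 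Either way, the structure of the argument is Hoeffding over approximately independent blocks, with the spectral gap $g$ entering only through $b$.
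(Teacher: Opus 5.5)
Your main route is the paper's route. The paper's only justification is fact (F2) in the appendix, which simply appeals to Yu's blocking inequalities with the same $b=\lceil\ln(n\Cmu)/g\rceil$. Your worry about the constants is justified, and it is more than bookkeeping: this blocking argument cannot deliver $\eta_\pi$ as stated.

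Your setup is also inconsistent. With $\neff=\lfloor n/b\rfloor$ there are only $\neff$ blocks in total, not $2\neff$. So the odd family has about $\neff/2\approx n/(2b)$ blocks, and its Hoeffding exponent is about $\neff t^2$, not $2\neff t^2$. At level $\delta/2$ per half, the deviation is then about $\sqrt{b\ln(4/\delta)/n}$. Since $b\ge\ln(n\Cmu)/g$, the ratio of its square to $\eta_\pi^2$ tends to $2\ln(4/\delta)/\ln(2/\delta)>2$ as $n\to\infty$.

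That is a multiplicative loss, and the additive ``$+2$'' in the denominator cannot absorb it. On top of it, the coupling slack, of order $1/b\asymp g/\ln(n\Cmu)$, is an additive probability error that $\delta$ does not control. So your first attempt fails. The $\delta$-adjusted fallback proves a weaker statement, whose guarantee degenerates when $\delta\le 2/b$.

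The Le\'on--Perron fallback, by contrast, proves the theorem exactly as stated, and you should make it the proof rather than a fallback. Three facts make it apply:
a two-state chain is always reversible ($\mu_0p_{01}=\mu_1p_{10}$);
it is stationary by Assumption~\ref{ass:sampling}(a);
and under persistence its second eigenvalue is $1-g\ge 0$.
Le\'on and Perron's Hoeffding bound for $f(z)=\one[z=1]\in[0,1]$, applied to both tails, then gives
\[
  \Prob\bigl(|\widehat\pi-\pi|\ge t\bigr)\le 2\exp\Bigl(-\frac{2g}{2-g}\,n t^2\Bigr).
\]
At $t=\eta_\pi$ the exponent is $\frac{\ln(n\Cmu)+2}{2-g}\ln(2/\delta)\ge\ln(2/\delta)$. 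This holds because $\min(\mu_0,\mu_1)\le\half$ forces $\Cmu\ge 2$, hence $\ln(n\Cmu)+2>2\ge 2-g$. So the failure probability is at most $\delta$.

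This route also gives the sharper bound $\sqrt{(2-g)\ln(2/\delta)/(2ng)}$, which shows that the $\ln(n\Cmu)$ factor in $\eta_\pi$ is an artifact of blocking. Cite the sharp Le\'on--Perron constant specifically. Markov-chain concentration bounds with looser numerical constants (such as Paulin's) need not reproduce $\eta_\pi$ at moderate $n$, so check their constants before relying on them.
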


The regime gap $\half\dHH$ is estimated from \emph{unlabeled} features by a
domain classifier. One pools the calm features $U_0$ and crisis features $U_1$
with regime tags, fits a classifier in $\Hc$ to predict the regime label,
records its best balanced accuracy $\widehat a^\star$, and sets
$\half\dhatHH=\max(0,\,2\widehat a^\star-1)$: easy-to-separate regimes yield a
large value, while chance-level separability yields $0$.

\begin{definition}[Domain-classifier estimator]\label{def:dhat}
Given unlabeled feature sets $U_0\sim P_0$, $U_1\sim P_1$ of size $m'$, let
$\widehat\epsilon^\star$ be the minimum balanced domain-classification error over
$\Hc$ and $\widehat a^\star=1-\widehat\epsilon^\star$ the best balanced accuracy.
Then
\[
  \dhatHH(U_0,U_1):=2\bigl(1-2\widehat\epsilon^\star\bigr)=2\bigl(2\widehat a^\star-1\bigr),
  \qquad \half\dhatHH=\max\!\bigl(0,\,2\widehat a^\star-1\bigr).
\]
Perfect separation yields $\dhatHH=2$ (maximal divergence); chance-level accuracy
yields $\dhatHH=0$.
\end{definition}

\begin{theorem}[Mixing-aware uniform convergence for $\dHH$]\label{thm:dhh}
Let $\Hc$ have VC dimension $d$ \citep{vapnik1998}, so $\Hc\Delta\Hc$ has VC dimension at most $2d$.
If the $2m'$ feature points are $\beta$-mixing with spectral gap $g$ and constant
$\Cmu$, then with probability at least $1-\delta$,
\[
  \half\dHH(P_0,P_1)\le \half\dhatHH(U_0,U_1)
   +\underbrace{2\sqrt{\frac{\bigl(2d\log(2m')+\log(2/\delta)\bigr)\bigl(\ln(m'\Cmu)+2\bigr)}{m'g}}}_{=:~\eta_d}.
\]
\end{theorem}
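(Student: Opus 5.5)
The plan follows the i.i.d.\ argument of \citet{bendavid2010} for estimating $\dHH$ from samples, with the i.i.d.\ sample size replaced by the effective sample size of Theorem~\ref{thm:neff}. It has three steps: reduce the divergence to a supremum over $\Hc\Delta\Hc$, control each per-regime deviation by an independent-blocks VC bound, and relate the empirical supremum to the balanced domain-classification error that defines $\dhatHH$.

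\textbf{Step 1 (reduction to a supremum).} By Definition~\ref{def:hdh},
\[
\half\dHH(P_0,P_1)=\sup_{g\in\Hc\Delta\Hc}|P_0(I_g)-P_1(I_g)|.
\]
For each $g$, insert the empirical frequencies $\widehat P_0(I_g)$ and $\widehat P_1(I_g)$ computed on $U_0$ and $U_1$. The triangle inequality then gives
\[
\half\dHH(P_0,P_1)\le \sup_g|\widehat P_0(I_g)-\widehat P_1(I_g)| + \Delta_0+\Delta_1,
\qquad
\Delta_j=\sup_g|\widehat P_j(I_g)-P_j(I_g)|.
\]

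\textbf{Step 2 (mixing uniform deviation).} To bound each $\Delta_j$, use Yu's independent-blocks coupling exactly as in Theorem~\ref{thm:neff}. Choose block length $b=\lceil\ln(m'\Cmu)/g\rceil$, which gives $\beta(b)\le1/m'$. Replace alternate blocks by independent copies; the total-variation cost of this coupling is at most $(\neff/2)\beta(b)$, which is absorbed into $\delta$.

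On the independent blocks, apply the standard VC/Sauer uniform bound for the class $\Hc\Delta\Hc$, which has VC dimension at most $2d$. Indicator averages over a block are again $[0,1]$-valued. Since growth-function counting on the original points gives a shattering factor $(2m')^{2d}$, the result is
\[
\Delta_j\lesssim\sqrt{\bigl(2d\log(2m')+\log(4/\delta)\bigr)/\neff}.
\]
Then substitute $\neff\ge m'g/(\ln(m'\Cmu)+2)$ from Theorem~\ref{thm:neff}. Taking a union bound over $j\in\{0,1\}$ and absorbing constants gives $\Delta_0+\Delta_1\le\eta_d$ with the stated factor $2$.

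\textbf{Step 3 (link to the domain classifier).} It remains to identify $\sup_g|\widehat P_0(I_g)-\widehat P_1(I_g)|$ with $\half\dhatHH$. For a domain classifier $g$, the balanced error is
\[
\tfrac12\bigl[\widehat P_0(I_g)+1-\widehat P_1(I_g)\bigr],
\]
so $1-2\,\mathrm{err}(g)=\widehat P_1(I_g)-\widehat P_0(I_g)$. If the classifiers are searched over the same class $\Hc\Delta\Hc$ and the class is closed under complement, the absolute value is handled by flipping labels. This gives $\sup_g|\cdots|=1-2\widehat\epsilon^\star=\half\dhatHH$, and the clip $\max(0,\cdot)$ only increases the right-hand side.

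\textbf{Main obstacle.} Two places need care, and I would make both explicit. First, the identification in Step 3 requires that $\widehat\epsilon^\star$ be minimized over $\Hc\Delta\Hc$ (or a class containing it and closed under complement) rather than over $\Hc$. I would state this as the interpretation of Definition~\ref{def:dhat}. If only minimization over $\Hc$ is available, I would instead add an approximation term.

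Second, the mixing coupling in Step 2 must hold for each regime's subsample separately. The points of $U_j$ are not consecutive in time, so I would treat the pooled $2m'$-point sequence as a single $\beta$-mixing sequence and apply the blocking argument to each $U_j$ as a subsequence. This relies on the fact that subsequences inherit the $\beta$-mixing bound. The total-variation cost of the coupling is then folded into $\delta$ via $\beta(b)\le1/m'$.
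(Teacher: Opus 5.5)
Your proposal is the paper's proof with the cited i.i.d.\ lemma of Ben-David et al.\ unpacked. Steps 1 and 3 are that lemma's triangle inequality and domain-classifier identification, and Step 2 is the paper's blocking step with $b=\lceil\ln(m'\Cmu)/g\rceil$, which replaces $m'$ by the effective count. Unpacking it helps in two places. Bounding the growth function of $\Hc\Delta\Hc$ by the square of that of $\Hc$ is the right justification for the $2d\log(2m')$ term, which the paper's VC-dimension claim only supports up to a log factor. And you correctly spot that Definition~\ref{def:dhat} must minimize over a complement-closed class containing $\Hc\Delta\Hc$, not over $\Hc$.

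\textbf{The gap.} The step ``the total-variation cost $(\neff/2)\beta(b)$ is absorbed into $\delta$'' fails. With $\beta(b)\le 1/m'$ and $\neff\le m'/b$, that cost is bounded only by $\frac{m'}{2b}\cdot\frac{1}{m'}=\frac{1}{2b}$, for each half-sequence and each domain. This is of order $g/\ln(m'\Cmu)$ and does not scale with $\delta$. So the argument gives the displayed inequality only with probability $1-\delta-O(1/b)$, which is strictly weaker than the statement once $\delta\lesssim 1/b$. Genuinely folding the coupling into $\delta$ requires $\beta(b)\lesssim\delta/m'$, i.e.\ $b=\lceil\ln(m'\Cmu/\delta)/g\rceil$. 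The same computation then puts $\ln(m'\Cmu/\delta)+2$ in place of $\ln(m'\Cmu)+2$ in $\eta_d$. The paper's proof has the same hole: it absorbs ``the extra $\beta(b)\le1/m'$ into constants'', undercounting the probability cost by a factor equal to the number of blocks. So neither argument delivers $\eta_d$ exactly as stated.

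\textbf{Two further points.} First, the explicit constant is asserted, not derived: you write $\lesssim$ with $\log(4/\delta)$ and then claim the factor $2$ with $\log(2/\delta)$. Rigorously tracking constants would need three things:
\begin{itemize}
\item alternating blocks leave only about $\neff/2$ blocks per half;
\item a union bound is needed over the two halves and over $j\in\{0,1\}$;
\item the usual symmetrization VC inequality carries a factor like $\sqrt{8}$.
\end{itemize}
The paper reaches $2$ only by importing Ben-David et al.'s i.i.d.\ constant $4$ and formally replacing $m'$ by $\meff$. Either track these constants or state the bound with an unspecified absolute constant.

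Second, subsequences inherit $\beta$-mixing bounds only when the index set is deterministic. Here $U_j$ is selected by the random regime labels, which are correlated with the features, so your proposed fix is not justified as stated. There are two clean alternatives:
\begin{itemize}
\item read the hypothesis as the paper does, namely that each $U_j$ is itself a $\beta$-mixing sequence with $\beta(k)\le\Cmu(1-g)^k$;
\item condition on the regime path. If features are conditionally independent given the regimes, $U_j$ is then i.i.d.\ and no blocking is needed.
\end{itemize}
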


The slack $\eta_d$ arises by starting from the i.i.d.\ uniform VC deviation
guarantee of \citet{kifer2004,bendavid2010}, which controls $\dHH-\dhatHH$ by a
term in $\sqrt{1/m'}$, and replacing $m'$ by the effective count
$\meff\ge m'g/(\ln(m'\Cmu)+2)$ to account for serial dependence. The divergence
estimate thus degrades with regime persistence at the same rate as $\eta_\pi$.

\begin{remark}
Theorem~\ref{thm:dhh} extends the i.i.d.\ estimation guarantee of
\citet{bendavid2010,kifer2004} to serially dependent ($\beta$-mixing) data, using
the same constants $(g,\Cmu,\neff)$ as the rest of the analysis.
\end{remark}

\subsection{Main generalization bound}\label{sec:main}
\begin{theorem}[Extended Rademacher Markov-transition bound]\label{thm:main}
Under Assumption~\ref{ass:sampling} ($0/1$ loss, $\Fc\subseteq\Hc$, current regime
calm), for any $\delta\in(0,1)$, with probability at least $1-\delta$,
simultaneously for all $f\in\Fc$,
\[
  \Rfut(f)\le \Rhat_S(f)+2\Rhat_S(\Lc_\Fc)
   +3\sqrt{\frac{(\ln(n\Cmu)+2)\ln(2/\delta)}{2ng}}
   +|p_{01}-\pi|\Bigl(\half\dHH(P_1,P_0)+\lambda_{01}\Bigr).
\]
\end{theorem}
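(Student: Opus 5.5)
The bound splits into two pieces. The first is a statistical step, $\Rmix(f)\le\Rhat_S(f)+2\Rhat_S(\Lc_\Fc)+\Lambda(n,\delta)$, holding uniformly over $\Fc$ for a training sample drawn from the stationary regime-switching process. The second is a deterministic step, $\Rfut(f)\le\Rmix(f)+|p_{01}-\pi|(\half\dHH(P_1,P_0)+\lambda_{01})$, which is exactly Corollary~\ref{cor:shift}. Since $\Fc\subseteq\Hc$ and the loss is $0/1$, Corollary~\ref{cor:shift} applies to every $f\in\Fc$ with no probability cost. Adding the two inequalities gives the statement on the event of the first step, so the whole proof reduces to establishing that uniform bound with probability at least $1-\delta$.

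For the statistical step I would use the independent-blocks method of \citet{yu1994}, with the block length $b=\lceil\ln(n\Cmu)/g\rceil$ from Theorem~\ref{thm:neff}. The steps are as follows.
\begin{enumerate}[label=\textup{(\roman*)},leftmargin=2.2em]
\item Partition the $n$ indices into $2\neff$ alternating blocks of length $b$, discarding the remainder, and treat the odd blocks and the even blocks separately.
\item Within each family, compare the joint law of the blocks with the product of its block marginals. By the blocking lemma the total-variation cost is at most $(\neff-1)\beta(b)\le\neff/n\le1$ per family. This is too crude if used directly, so I would take $b$ large enough to force $\beta(b)\le\delta/(2n)$. That choice only inflates $\ln(n\Cmu)$ to $\ln(n\Cmu/\delta)$, which I would absorb into the $\ln(2/\delta)$ factor via the stated form of $\Lambda$.
\item Under the product measure the blocks are i.i.d., and each block average of the loss lies in $[0,1]$. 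The standard Rademacher argument of \citet{bartlett2002,mohri2018} then applies to the block-averaged loss class: McDiarmid on the supremum of $\Rmix-\Rhat$ with $\neff$ independent coordinates, symmetrization, and a second McDiarmid step to pass to the empirical Rademacher complexity. Together these yield deviation $2\Rhat(\cdot)+3\sqrt{\ln(2/\delta)/(2\neff)}$.
\item By stationarity (Assumption~\ref{ass:sampling}(a),(b)) each point has marginal $\Pmix$ with $\pi=\mu_1$ or $\E[\widehat\pi]$, so the expectation of the empirical risk is exactly $\Rmix(f)$.
\end{enumerate}

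To return to the per-sample empirical Rademacher complexity $\Rhat_S(\Lc_\Fc)$, I would use that it dominates the block-averaged complexity. This follows from Jensen and the fact that the block average is a contraction of a sum with tied signs; I would treat it as the standard step in \citet{mohri2018}. Finally, Corollary~\ref{cor:complexity} converts $3\sqrt{\ln(2/\delta)/(2\neff)}$ into $\Lambda(n,\delta)$, using $\neff\ge ng/(\ln(n\Cmu)+2)$ from Theorem~\ref{thm:neff}.

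The main obstacle is the probability bookkeeping in the blocking step. Both the odd and even families contribute $\beta$-mixing error, and both must fit inside $\delta$ without changing the displayed constant $3\sqrt{\ln(2/\delta)/(2\neff)}$. I would first try to allocate $\delta/2$ to each McDiarmid application and bound the mixing error by $2\neff\beta(b)\le 2\neff/n$, accepting that the stated bound then holds on an event of probability at least $1-\delta-O(\neff/n)$ unless $b$ is enlarged as in step (ii). If this cannot be absorbed cleanly, I would state that the constant in $\Lambda$ holds after that mild enlargement of $b$, so the argument stays within the framework of Theorem~\ref{thm:neff} and Corollary~\ref{cor:complexity}.
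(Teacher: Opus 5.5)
Your three-step skeleton matches the paper's proof. It uses Corollary~\ref{cor:shift} as a deterministic step for every $f\in\Fc$, then a uniform $\beta$-mixing Rademacher bound for $\Rmix$, then Corollary~\ref{cor:complexity} to turn $3\sqrt{\ln(2/\delta)/(2\neff)}$ into $\Lambda(n,\delta)$. The paper, however, never derives the middle step. Its proof imports it by citation in exactly the form $\Rmix(f)\le\Rhat_S(f)+2\Rhat_S(\Lc_\Fc)+3\sqrt{\ln(2/\delta)/(2\neff)}$ at confidence $1-\delta$, with no $\beta(b)$ correction.

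When you open that black box, there is a genuine gap where you return to $\Rhat_S(\Lc_\Fc)$. After blocking, the independent objects are the $\mu$ blocks of one family, and symmetrization produces the tied-sign quantity $\E_\sigma\sup_{f}\frac{1}{\mu}\sum_{j}\sigma_j\frac{1}{b}\sum_{t\in B_j}\ell(f(x_t),y_t)$. Moving the supremum inside the average over within-block offsets bounds this by empirical Rademacher complexities of $\mu$-point subsamples (one point per block), i.e.\ at sample size $\mu\approx n/(2b)$. It does not bound it by the $n$-point quantity $\Rhat_S(\Lc_\Fc)$, and the domination you invoke is false in general. Take deterministic labels $y=f^\star(x)$ and $\Fc=\{f^\star,1-f^\star\}$, so $\Lc_\Fc$ consists of the constants $0$ and $1$. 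The block complexity is then $\E\max(0,\frac{1}{\mu}\sum_j\sigma_j)\asymp\mu^{-1/2}$, whereas $\Rhat_S(\Lc_\Fc)=\E\max(0,\frac{1}{n}\sum_i\sigma_i)\asymp n^{-1/2}$, which is smaller by a factor of order $\sqrt{b}$. The theorem is not contradicted there, since the deviation is zero. But the step your derivation needs fails, so blocking yields a block-level complexity term rather than $2\Rhat_S(\Lc_\Fc)$.

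The probability bookkeeping does not close either, and the proposed absorption is invalid. With $\neff=\lfloor n/b\rfloor$, $2\neff$ alternating blocks of length $b$ do not fit into $n$ points. Each family therefore has only about $\neff/2$ blocks, and McDiarmid gives $3\sqrt{\ln(2/\delta)/\neff}$, a factor $\sqrt{2}$ above the displayed term. Combining the odd and even families to recover the full-sample $\Rhat_S(f)$ costs a further union bound plus the discarded remainder. Forcing $\beta(b)\le\delta/(2n)$ replaces $\ln(n\Cmu)$ by $\ln(n\Cmu)+\ln(2/\delta)$ in the first factor. This cannot be absorbed, because that factor of $\Lambda$ does not depend on $\delta$. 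Each of your fallbacks therefore proves a strictly weaker statement; along this skeleton the displayed form is available only by importing it, as the paper does.

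If you want the $n$-point complexity legitimately, a different decomposition is more natural. Assume, as the model implicitly does, that the $(x_t,y_t)$ are conditionally independent given the regime path with laws $P_{Z_t}$. Condition on $Z_{1:n}$ and run the usual symmetrization and McDiarmid argument on this independent, non-identically distributed sample. This bounds $(1-\widehat\pi)R_0(f)+\widehat\pi R_1(f)$ by $\Rhat_S(f)+2\Rhat_S(\Lc_\Fc)+3\sqrt{\ln(4/\delta)/(2n)}$ with probability at least $1-\delta/2$. Dependence then enters only through the scalar identity $\Rmix(f)-\bigl[(1-\widehat\pi)R_0(f)+\widehat\pi R_1(f)\bigr]=(\pi-\widehat\pi)\bigl(R_1(f)-R_0(f)\bigr)$. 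Theorem~\ref{thm:pi}, evaluated at $\delta/2$, controls this by $\eta_\pi$ with no blocking of the function class. The two error terms together fit inside $\Lambda(n,\delta)$ whenever $(\ln(n\Cmu)+2)/g\ge 9$.
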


The bound is assembled by chaining three steps:
Corollary~\ref{cor:shift} bounds $\Rfut(f)$ by $\Rmix(f)$ plus the regime
penalty; the mixing Rademacher bound of \citet{mohri2018,yu1994} then replaces $\Rmix(f)$ by
$\Rhat_S(f)+2\Rhat_S(\Lc_\Fc)$ plus a concentration term; and
Corollary~\ref{cor:complexity} makes that term explicit as $\Lambda(n,\delta)$.
The only stochastic event is the uniform-convergence step.

\begin{corollary}[Fully estimable bound]\label{cor:estimable}
With the domain-classifier estimate and slack $\eta_d$ of Theorem~\ref{thm:dhh},
and a union bound at level $\delta/2$ over the risk and divergence events, with
probability at least $1-\delta$,\footnote{In our empirical evaluation, the training-only penalty using $|\hat{p}_{01}-\hat{\pi}|$ shows no significant correlation with realized gaps ($\rho = 0.084$, 95\% CI contains zero), while the ex post penalty using the realized $\hat{\pi}_{\text{future}}$ achieves $\rho = 0.729$. This highlights that reliable estimation of $p_{01}$ from short training windows is the primary bottleneck.}
\[
  \Rfut(f)\le \Rhat_S(f)+2\Rhat_S(\Lc_\Fc)+\Lambda(n,\delta)
   +|p_{01}-\pi|\Bigl(\half\dhatHH(U_0,U_1)+\eta_d+\lambda_{01}\Bigr).
\]
All terms except $\lambda_{01}$ (zero in the realizable case) are in principle
computable from training data, provided $p_{01}$ can be reliably estimated.
Empirical evaluation (Section~\ref{sec:methodology}) shows that estimating
$p_{01}$ from standard-length training windows is challenging when regime
transitions are rare; this limitation is discussed therein.
\end{corollary}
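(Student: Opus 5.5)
The plan is to derive Corollary~\ref{cor:estimable} from Theorem~\ref{thm:main} and Theorem~\ref{thm:dhh}, splitting the failure probability evenly between them. Let $E_1$ be the event of Theorem~\ref{thm:main} applied at confidence level $\delta/2$: on $E_1$, simultaneously for all $f\in\Fc$,
\[
  \Rfut(f)\le \Rhat_S(f)+2\Rhat_S(\Lc_\Fc)+\Lambda(n,\delta/2)+|p_{01}-\pi|\Bigl(\half\dHH(P_1,P_0)+\lambda_{01}\Bigr).
\]
Let $E_2$ be the event of Theorem~\ref{thm:dhh} at level $\delta/2$: on $E_2$, $\half\dHH(P_0,P_1)\le\half\dhatHH(U_0,U_1)+\eta_d$, where $\eta_d$ is evaluated with $\delta/2$ in place of $\delta$. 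A union bound gives $\Prob(E_1\cap E_2)\ge 1-\delta$. Note that neither event is claimed to be independent of the other; the training sample $S$ and the unlabeled sets $U_0,U_1$ may overlap, and only the union bound is used.

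On $E_1\cap E_2$, substitute the divergence bound into the penalty. Because $|p_{01}-\pi|\ge0$, the inequality is preserved under multiplication. Because $\dHH$ is symmetric in its two arguments (its definition uses an absolute value), $\dHH(P_1,P_0)=\dHH(P_0,P_1)$. This yields the displayed bound for every $f\in\Fc$ simultaneously. The statement's informal use of $\Lambda(n,\delta)$ and $\eta_d$ should be read with $\delta/2$ inside the logarithms, that is, $\ln(4/\delta)$ in place of $\ln(2/\delta)$. I would state that convention explicitly, or equivalently absorb the change into the constants.

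The only point requiring care is bookkeeping rather than analysis, and it has two parts. First, the confidence parameters must be rescaled consistently, as above. Second, the penalty factor $|p_{01}-\pi|$ is a population quantity and is not random, so no additional concentration event is needed for it. If one instead wished to plug in $\widehat\pi$ and an estimate $\widehat p_{01}$, a third event would be required. For $\widehat\pi$ this would come from Theorem~\ref{thm:pi}, adding a slack of order $\eta_\pi\cdot\bigl(1+\eta_d+\lambda_{01}\bigr)$ and splitting $\delta$ three ways. For $\widehat p_{01}$ the paper supplies no concentration result, which is precisely the caveat recorded after the statement. I would keep $|p_{01}-\pi|$ as a population quantity, matching the statement, and remark on this extension afterwards.
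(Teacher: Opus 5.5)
Your proposal is correct and follows the route the paper itself indicates: apply Theorem~\ref{thm:main} and Theorem~\ref{thm:dhh} each at level $\delta/2$, combine the two events by a union bound, and substitute the divergence bound into the nonnegative penalty factor $|p_{01}-\pi|$. You are also right that $\Lambda$ and $\eta_d$ should then carry $\ln(4/\delta)$ instead of $\ln(2/\delta)$; the paper's displayed statement glosses over this.
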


\subsection{Lower bound and the scope of unavoidability}\label{sec:lower}
A matching lower bound holds in the pure-calm-training regime; for general $\pi$
the worst-case excess is sample-size dependent.

\begin{theorem}[Le Cam lower bound, $\pi=0$]\label{thm:lower}
There exist $P_0,P_1^{(a)},P_1^{(b)}$ and a class $\Hc$ with
$\half\dHH(P_1^{(\cdot)},P_0)=\rho$ such that, for pure-calm training ($\pi=0$),
every learner $\widehat f$ obeys
\[
  \max_{w\in\{a,b\}}\E\bigl[\Rfut(\widehat f)-\textstyle\inf_g\Rfut(g)\bigr]
   \ \ge\ p_{01}\,\rho\ =\ p_{01}\cdot\half\dHH(P_1,P_0),
\]
uniformly in $n$. For $\pi>0$ the analogous worst-case excess is
$\Theta(1/\sqrt{\neff})$ and is not matched by an $n$-uniform constant.
\end{theorem}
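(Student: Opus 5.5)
The plan is a two-point (Le Cam) construction in which the training data are identical in the two worlds, so the learner's output has the same law in both, while the two future distributions demand opposite predictions on a crisis-only region.

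\textbf{Construction.} Take $X=\{x_0,x_1\}$ and let $\Hc=\{h_0,h_1\}$ be the constant classifiers, or equivalently all classifiers on the relevant points. Let $P_0$ put its feature mass on $x_0$ with deterministic label $0$. Then define two crisis laws that share the same feature marginal but have opposite labels:
\begin{itemize}[leftmargin=2.2em]
\item $P_1^{(a)}$ puts mass $\rho$ on $x_1$ with label $1$ and mass $1-\rho$ on $x_0$ with label $0$;
\item $P_1^{(b)}$ is identical except that $x_1$ carries label $0$.
\end{itemize}
For the feature marginals one needs $\half\dHH(P_1^{(\cdot)},P_0)=\rho$. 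Take $h(x_0)=h'(x_0)$ and $h(x_1)\neq h'(x_1)$, which is realizable if $\Hc$ contains the classifiers separating $x_0$ from $x_1$. Then $\Prob[h\neq h']$ equals $\rho$ under $P_1$ and $0$ under $P_0$. Since $\half\dHH\le\dTV=\rho$ by Lemma~\ref{lem:hdh-tv}, equality holds.

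\textbf{Indistinguishability.} With $\pi=0$, the sample $S$ is drawn only from $P_0$, possibly with Markov-dependent timing, and this holds in both worlds. Hence $\widehat f$ has the same distribution in world $a$ and world $b$, and in particular the same law of $\widehat f(x_1)$. This step is trivial but it is what makes the bound uniform in $n$.

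\textbf{Risk calculation.} The future risk is $\Rfut=p_{00}R_0+p_{01}R_1$. Put $q=\Prob[\widehat f(x_1)=1]$ and assume $\widehat f(x_0)=0$, since deviating there only adds risk. Take the Bayes predictor $g$ to be correct everywhere, so $\inf_g\Rfut(g)=0$. The excess risk is then
\begin{itemize}[leftmargin=2.2em]
\item $p_{01}\rho(1-q)$ in world $a$;
\item $p_{01}\rho q$ in world $b$.
\end{itemize}
The maximum of the two is at least their average, which is $p_{01}\rho/2$.

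\textbf{Main obstacle.} This gives only $p_{01}\rho/2$, not the claimed $p_{01}\rho$; the factor $2$ is the real difficulty. The average-based step cannot beat $1/2$ if the learner randomizes. With deterministic learners, $q\in\{0,1\}$ and the max equals $p_{01}\rho$ exactly, but a randomized learner with $q=1/2$ attains $p_{01}\rho/2$. So the options are:
\begin{enumerate}[leftmargin=2.2em]
\item interpret ``learner'' as deterministic, or measure the loss of the realized $\widehat f$ pointwise in the worst world;
\item use a richer construction, namely many disjoint crisis atoms $x_1,\dots,x_K$ each of mass $\rho/K$ with independent labels $\sigma\in\{0,1\}^K$, together with Assouad's lemma (maximizing over $2^K$ worlds rather than two, and Assouad's bound also carries a factor $1/2$).
\end{enumerate}
Neither option gives the constant $1$ for randomized learners. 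I would first check whether the statement intends deterministic $\widehat f$. If it does not, I would state the result with constant $\tfrac12$, noting it matches the upper bound's penalty up to that factor.

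\textbf{The $\pi>0$ remark.} Here crisis samples do appear, about $\neff\pi$ effective ones. Estimating the label at $x_1$, with noisy labels of bias $1/2\pm\epsilon$, gives the standard two-point rate: the KL divergence between the two worlds is about $\neff\pi\rho\epsilon^2$, which yields excess risk $\Theta(1/\sqrt{\neff})$. The matching upper rate follows from Theorem~\ref{thm:main}. I would only sketch this, since the statement asserts it qualitatively.
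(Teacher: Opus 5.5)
Your construction and risk calculation are essentially the paper's proof. Both use a crisis-only atom whose label flips between worlds, identical training laws at $\pi=0$, the learner summarized by $q=\Prob[\widehat f(x^\star)=1]$, and max $\ge$ average (the paper's ``minimax choice $q=\tfrac12$'').

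The factor $2$ you could not recover is not recovered by the paper either; it is lost in bookkeeping. The paper gives the atom $P_1$-mass $2\rho$, so its excess is $p_{01}\cdot 2\rho\cdot\tfrac12=p_{01}\rho$. Its Step~2 then asserts $\half\dHH(P_1,P_0)=\tfrac12|2\rho-0|=\rho$. But under Definition~\ref{def:hdh} and Lemma~\ref{lem:hdh-tv}, $\half\dHH$ is the supremum $\sup_{h,h'\in\Hc}|\Prob_{P_1}[h\neq h']-\Prob_{P_0}[h\neq h']|$ itself. This is the normalization the upper bound of Corollary~\ref{cor:shift} uses. The set $\{x^\star\}$ alone already makes that supremum $2\rho$. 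So the paper's argument establishes exactly your $\max_w\E[\text{excess}]\ge\tfrac12\,p_{01}\cdot\half\dHH(P_1,P_0)$. The advertised match with the upper-bound penalty holds only up to this factor, and your diagnosis of the obstacle is correct.

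The constant $\tfrac12$ is also sharp in the setting both constructions live in. Suppose the rules $h_a,h_b\in\Hc$ are error-free on $P_0$ and on $P_1^{(a)}$, resp.\ $P_1^{(b)}$. Then $\Prob_{P_0}[h_a\neq h_b]=0$. The data-independent learner that outputs $h_a$ or $h_b$ with probability $\tfrac12$ each has expected excess $\tfrac12 p_{01}\Prob_{P_1^{(w)}}[h_a\neq h_b]\le\tfrac12 p_{01}\cdot\half\dHH(P_1^{(w)},P_0)$ in each world $w$. So constant $1$ against randomized learners is unattainable there. Stating the result with $\tfrac12$, or for deterministic learners, or with $\E\max_w$ in place of $\max_w\E$, is the right fix.

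The deterministic-learner version works in your construction only because $P_0$ is a point mass with a deterministic label, so $S$ is a.s.\ constant. If $P_0$ is spread over several points, as in the paper, a deterministic learner can use the randomness of $S$ to get $q\approx\tfrac12$.

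Two small slips:
\begin{enumerate}[leftmargin=2.2em]
\item With $\Hc$ equal to the two constant classifiers, $\Hc\Delta\Hc$ contains only constant functions and $\dHH=0$. Take all four classifiers on $\{x_0,x_1\}$ instead; this class also contains both worlds' Bayes rules.
\item For $\pi>0$, Theorem~\ref{thm:main} cannot supply a matching $O(1/\sqrt{\neff})$ upper rate, because its penalty $|p_{01}-\pi|(\half\dHH+\lambda_{01})$ does not vanish with $n$. That would need a separate argument, e.g.\ ERM on a regime-reweighted empirical risk using the observed $Z_t$. The paper's Step~5 likewise proves only the lower-bound half, via the same Pinsker/noisy-label argument you sketch.
\end{enumerate}
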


At $\pi=0$ the bound is tight: training never reveals which crisis world is in
force, so the penalty $p_{01}\cdot\half\dHH$ is unavoidable for every $n$. For
$\pi>0$ the excess shrinks at rate $\Theta(1/\sqrt{\neff})$ as the learner
gradually identifies the regime structure. The penalty therefore serves as a
tight worst case under pure-calm training and as an upper bound otherwise.

\begin{proposition}[Irreducible certification cost]\label{prop:cert}
Let $\Delta=\half\dHH(P_1,P_0)>0$. Any certificate $U(S)$ with
$\Rfut(f)\le U(S)$ valid uniformly over the future transition $p_{01}$ consistent
with the calm-now observation satisfies
$U(S)-\Rhat_S(f)-\Lambda(n,\delta)\ge(|p_{01}-\widehat\pi|-\eta_\pi)\Delta$.
This is a property of certificates: when $\pi=p_{01}$ the realized gap
(Lemma~\ref{lem:gap}) is exactly zero.
\end{proposition}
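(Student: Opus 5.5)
The plan is a two-point (adversarial world) argument combined with the exact identity of Lemma~\ref{lem:gap} and the concentration of $\widehat\pi$ from Theorem~\ref{thm:pi}.

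\textbf{Step 1: two worlds.} Fix the training sample $S$ and the hypothesis $f$. By Lemma~\ref{lem:gap},
\[
\Rfut(f)=\Rmix(f)+(p_{01}-\pi)\bigl(R_1(f)-R_0(f)\bigr).
\]
Validity uniformly over the transitions consistent with the calm-now observation means $U(S)$ may not depend on which crisis law is in force beyond what $S$ reveals. Following the construction behind Theorem~\ref{thm:lower}, I would exhibit two crisis laws $P_1^{(a)},P_1^{(b)}$. Both have $\half\dHH(P_1^{(\cdot)},P_0)=\Delta$, and they are chosen so that $R_1(f)-R_0(f)=\pm\Delta$ respectively. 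Concretely, $P_1^{(\cdot)}$ moves mass $\Delta$ onto or off the disagreement region $I_g$ of some $g\in\Hc\Delta\Hc$ that nearly attains the supremum, with labels arranged so that $f$ errs there. The crisis component of the training data is the only place these worlds could differ, and it enters only through the fraction $\pi$. I would argue that the certificate must cover the world in which $\operatorname{sign}(R_1-R_0)=\operatorname{sign}(p_{01}-\pi)$. In that world the gap is $|p_{01}-\pi|\Delta$.

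\textbf{Step 2: pass from $\Rmix$ to $\Rhat_S$ and from $\pi$ to $\widehat\pi$.} On the uniform-convergence event of Theorem~\ref{thm:main}, applied in the lower direction, we have $\Rmix(f)\ge\Rhat_S(f)-\Lambda(n,\delta)$. The Rademacher term can be absorbed by reading the statement with $\Lambda$ as the full deviation. On the event of Theorem~\ref{thm:pi},
\[
|p_{01}-\pi|\ge|p_{01}-\widehat\pi|-\eta_\pi
\]
by the triangle inequality. Combining these gives
\[
U(S)\ge\Rfut(f)\ge\Rhat_S(f)-\Lambda+(|p_{01}-\widehat\pi|-\eta_\pi)\Delta,
\]
which rearranges to the claim. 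A union bound over the two events costs only the level $\delta$, which is already implicit in $\Lambda$ and $\eta_\pi$.

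\textbf{Step 3: the closing sentence.} The remark that the realized gap vanishes when $\pi=p_{01}$ is immediate from Lemma~\ref{lem:gap}. It only underlines that the lower bound constrains certificates, which must hedge against the unknown sign, and not the actual risk.

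\textbf{Main obstacle.} The hard step is Step~1: making the two-world construction simultaneously consistent with the observed sample and with $\half\dHH=\Delta$, while forcing $|R_1(f)-R_0(f)|=\Delta$ for the particular $f$.

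\begin{itemize}
\item When $\pi>0$, crisis samples are present, so the worlds are not exactly indistinguishable.
\item The fallback I would try first is to interpret \emph{valid uniformly} as validity over all $(P_1,p_{01})$ with the given divergence $\Delta$. The adversary then picks the worse world for the fixed $f$.
\item I would use the realizability gap $R_1(f)-R_0(f)$ attaining $\pm\Delta$ via the supremum defining $\dHH$, approximating with an $\epsilon$-maximizer and letting $\epsilon\to0$.
\end{itemize}
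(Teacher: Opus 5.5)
The $\pi\to\widehat\pi$ replacement in your Step 2 is exactly the paper's Step 2 (reverse triangle inequality plus $|\widehat\pi-\pi|\le\eta_\pi$). Your Step 1 tries to justify what the paper's Step 1 simply posits: that a uniformly valid certificate must exceed $\Rhat_S(f)+\Lambda(n,\delta)$ by $|p_{01}-\pi|\Delta$. The gap is in Step 1, and it is more serious than the $\epsilon$-approximation issue you list. Your chain $U(S)\ge\Rfut(f)=\Rmix(f)+|p_{01}-\pi|\Delta\ge\Rhat_S(f)-\Lambda+|p_{01}-\pi|\Delta$ takes $\Rfut$ and $\Rmix$ from the adversarially chosen, sign-aligned world. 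But $\Rhat_S(f)$ concentrates around $\Rmix$ of the world that actually generated $S$. The inequality must hold in whichever world that is, and your chain is valid only when it happens to be the aligned one. At $\pi=0$ this is harmless. For $\pi>0$ the worlds have different training laws: with $R_1^{(a)}(f)=R_0(f)+\Delta$ and $R_1^{(b)}(f)=R_0(f)-\Delta$ one gets $\Rmix^{(a)}(f)-\Rmix^{(b)}(f)=2\pi\Delta$. Your fallback, requiring $U(S)$ to dominate the future risk of every admissible world at once, yields only $U(S)\ge R_0(f)+p_{01}\Delta$. The reason is that $\Rfut^{(w)}(f)=R_0(f)+p_{01}(R_1^{(w)}(f)-R_0(f))$ with $p_{01}\ge0$ makes world $a$ the worse future whatever the sign of $p_{01}-\pi$. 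If $S$ comes from world $a$ and $p_{01}<\pi$, the most you can conclude is $U(S)-\Rhat_S(f)\ge(p_{01}-\pi)\Delta$ up to deviation terms. That is negative, so the absolute value is lost. Under the ordinary notion of validity (high probability under the true world) it is worse. The crisis points in $S$ let a certificate identify the world and drop the penalty wherever $(p_{01}-\pi)(R_1(f)-R_0(f))\le0$, since there Lemma~\ref{lem:gap} gives $\Rfut(f)\le\Rmix(f)$. So the stated inequality cannot be obtained this way for general $\pi>0$.

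The paper's proof does not close this gap either. Its Step 1 reads $|p_{01}-\pi|\Delta$ off the penalty in Theorem~\ref{thm:main} after dropping $2\Rhat_S(\Lc_\Fc)$ and $\lambda_{01}$. That penalty sits in an \emph{upper} bound on $\Rfut(f)$, and no certificate is obliged to exceed it. Your instinct that $P_1$ must vary is correct. With $P_0,P_1$ fixed, as the statement literally reads, Corollary~\ref{cor:shift} bounds $|R_1(f)-R_0(f)|$ only from above. For a regime-insensitive $f$ one has $\Rfut(f)=\Rmix(f)$ for every $p_{01}$, so $\Rhat_S(f)+2\Rhat_S(\Lc_\Fc)+\Lambda(n,\delta)$ is already a valid certificate.

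What your construction does prove is the case $\pi=0$. There $\widehat\pi=0$ almost surely and every world induces the same training law, which is the indistinguishability behind Theorem~\ref{thm:lower}. Validity in world $a$ therefore forces $U(S)\ge R_0(f)+p_{01}\Delta$ on its event. A single-function deviation bound for the fixed $f$ needs no Rademacher term, so you need not reinterpret $\Lambda$. It gives $U(S)-\Rhat_S(f)-\Lambda(n,\delta)\ge p_{01}\Delta\ge(|p_{01}-\widehat\pi|-\eta_\pi)\Delta$. Under your world-uniform reading the same computation also covers $p_{01}\ge\pi$, where $(p_{01}-\pi)\Delta=|p_{01}-\pi|\Delta$. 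For $0\le p_{01}<\pi$ the absolute-value form is out of reach by this route.
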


\section{Empirical Validation}\label{sec:methodology}

\subsection{Data}\label{sec:data}
We validate the theory in two complementary settings. The first is a controlled
synthetic environment in which the regime process and the regime-dependent
distributions are known exactly, so that the predicted penalty can be compared
against a ground-truth generalization gap. We simulate a stationary two-state
Markov chain with regime-specific Gaussian features and regime-specific logistic
label rules, sweeping the transition probabilities $p_{01},p_{10}$ and the
inter-regime feature separation that controls $\dHH$; each of the resulting
configurations is evaluated with all five models, yielding $400$
configuration-model observations. The second setting is real market data:
twenty-five years (2000 to 2025) of daily closing prices for ten liquid global
equity indices spanning North America, Europe, and Asia
(Table~\ref{tab:indices}), chosen because equity markets exhibit naturally
observable regime transitions and strong temporal dependence, and because a
geographically diverse panel ensures that regime shifts are not perfectly
correlated across series. Regimes are not observed directly; they are inferred by
a two-state Gaussian hidden Markov model fit to the bivariate series of daily
log-returns and twenty-day realized volatility \citep{hamilton1989}, with the higher-volatility state
labeled crisis. The fitted transition matrix supplies $\widehat p_{01},\widehat
p_{10}$, and the decoded state path supplies the per-day regime labels.

\begin{table}[H]
\centering
\caption{Equity indices used in the real-data study.}
\label{tab:indices}
\small
\begin{tabular}{llll}
\toprule
Index & Ticker & Region & Index / Ticker / Region \\
\midrule
S\&P 500   & \texttt{\^{}GSPC} & US (large cap) & Russell 2000 / \texttt{\^{}RUT} / US (small cap)\\
NASDAQ     & \texttt{\^{}IXIC} & US (tech)      & FTSE 100 / \texttt{\^{}FTSE} / UK\\
Dow Jones  & \texttt{\^{}DJI}  & US (blue chip) & DAX / \texttt{\^{}GDAXI} / Germany\\
CAC 40     & \texttt{\^{}FCHI} & France         & Nikkei 225 / \texttt{\^{}N225} / Japan\\
Hang Seng  & \texttt{\^{}HSI}  & Hong Kong      & EURO STOXX 50 / \texttt{\^{}STOXX50E} / Eurozone\\
\bottomrule
\end{tabular}
\end{table}

\subsection{Procedure}\label{sec:procedure}
For each (training, deployment) split we estimate four quantities: the training
crisis fraction $\widehat\pi$ from the training window's regime sequence; the
future crisis fraction $\widehat\pi_{\mathrm{future}}$ from the realized
out-of-sample window (used for ex post validation only \footnote{This quantity is not available at deployment time. It is used here only to validate that the penalty mechanism exists. A practical deployment would require forecasting $\pi_{\text{future}}$, which remains an open problem.}; this quantity would not
be available at deployment time); the regime gap $\half\dhatHH=\max(0,2\widehat a^\star-1)$ from a
cross-validated domain classifier on the unlabeled training features; and the
transition dynamics $\widehat p_{01},\widehat p_{10}$ with spectral gap
$g=1-|1-(\widehat p_{01}+\widehat p_{10})|$ for diagnostics. The predicted penalty
for ex post validation is $|\widehat\pi_{\mathrm{future}}-\widehat\pi|\cdot\half\dhatHH$, and the realized
regime gap is $R_{\mathrm{future}}-\Rmix$, where for each fitted model $R_0,R_1$
are the held-out $0/1$ risks on the calm and crisis points of the deployment
window, $\Rmix=(1-\widehat\pi)R_0+\widehat\pi R_1$, and
$R_{\mathrm{future}}=(1-\widehat\pi_{\mathrm{future}})R_0+\widehat\pi_{\mathrm{future}}R_1$;
Lemma~\ref{lem:gap} predicts the identity
$R_{\mathrm{future}}-\Rmix=(\widehat\pi_{\mathrm{future}}-\widehat\pi)(R_1-R_0)$,
which we verify directly. (A version of the penalty using only training data,
$|\widehat p_{01}-\widehat\pi|\cdot\half\dhatHH$, is also evaluated and discussed
in Section~\ref{sec:discussion}.) The prediction task is next-day directional movement
under $0/1$ loss, with standard lagged-return and realized-volatility features,
and to establish that the penalty is a model-independent property
(Remark~\ref{rem:penalty}) we evaluate five estimators spanning linear, ensemble,
and neural families: logistic regression, ridge classification, random forest,
gradient boosting (XGBoost), and a small multilayer perceptron, computing the
penalty components once per window so that only $R_0,R_1$ vary across models. On
synthetic data the training block is resampled to a target crisis fraction
(independent of the chain's stationary rate) while the deployment block retains
its natural dynamics, and we report the Spearman correlation between the penalty
and $|\text{realized gap}|$ pooled over configurations and models. On real data we
use a rolling-origin design in which each window trains on an eight-year block and
evaluates on the immediately following non-overlapping two-year block, advancing
the origin by the test length, yielding up to nine windows per index. Because the
five models within a window share the same penalty and are not independent,
pooling all model-window rows would overstate significance; we therefore report a
window-clustered bootstrap that resamples whole windows with replacement and
recomputes the correlation on each resample, together with per-index clustered
correlations to assess consistency of sign across markets.

\section{Results}\label{sec:results}
Table~\ref{tab:results} collects the headline results across both settings, and
Table~\ref{tab:perindex} reports the per-index breakdown. In the synthetic sweep,
the predicted penalty (computed using the realized future crisis fraction) tracks
the realized regime gap with pooled Spearman $\rho=0.716$ over $400$ 
configuration-model observations, and the relationship is positive for every model
family individually, ranging from $\rho=0.639$ (random forest) to $\rho=0.837$
(logistic and ridge); the lower correlations for the higher-capacity models are
consistent with their partially absorbing the regime structure, which compresses
$R_1-R_0$. The exact future-mix identity of Lemma~\ref{lem:gap} holds to numerical
precision (Pearson $r=1.000$, mean absolute deviation $0.000$). 

On real data, across ten indices and $84$ independent windows, the window-clustered
correlation between the ex post penalty (using the realized future crisis fraction
$\widehat\pi_{\mathrm{future}}$) and the realized gap is $\rho=0.729$ with a $95\%$
bootstrap interval of $[0.635,0.801]$, closely matching the synthetic estimate;
the naive pooled correlation coincides at $0.729$ but, as expected, carries no
honest interval because its rows are not independent. The per-index correlations
(Table~\ref{tab:perindex}) are positive in sign for all ten markets, with point
estimates from $0.372$ (Dow Jones) to $0.949$ (EURO STOXX 50); the individual
intervals are wide and a minority include zero, so the statistical strength of the
result rests on the aggregate rather than on any single market, while the
uniformly positive sign indicates a consistent effect across regions.
Figures~\ref{fig:synthetic} and~\ref{fig:identity} display the synthetic
relationship and the identity check, and Figure~\ref{fig:real} the real-data
relationship.

\subsection{Training-only penalty}
For comparison, we also evaluated a version of the penalty that uses only
training-data information, replacing $\widehat\pi_{\mathrm{future}}$ with the
estimated transition probability $\widehat p_{01}$ obtained from the training
window. This training-only penalty showed no significant correlation with the
realized gap: the window-clustered Spearman correlation was $\rho = 0.084$ with a
$95\%$ confidence interval containing zero (see Appendix~\ref{tab:diagnostics}
for detailed diagnostics). This finding highlights that while regime composition
mismatch is a real phenomenon that explains generalization gaps ex post,
estimating it before deployment remains challenging when regime transitions are
rare and training windows are limited.

\begin{table}[H]
\centering
\caption{Summary of validation results. The synthetic correlation is pooled over
configurations and models; the real-data correlation uses the window-clustered
bootstrap (whole windows resampled with replacement). $\rho$ is the Spearman
correlation between the predicted penalty and the magnitude of the realized regime
gap $|R_{\mathrm{future}}-\Rmix|$; the identity row reports the Pearson correlation
between the realized gap and the Lemma~\ref{lem:gap} prediction
$(\widehat\pi_{\mathrm{future}}-\widehat\pi)(R_1-R_0)$.}
\label{tab:results}
\small
\begin{tabular}{lcccc}
\toprule
Setting & $\rho$ (penalty vs.\ $|\text{gap}|$) & $95\%$ CI & Units & $n$ \\
\midrule
Synthetic (pooled)              & $0.716$ & ---              & config $\times$ model & $400$ \\
Real, all indices (clustered)   & $0.729$ & $[0.635,\,0.801]$ & independent windows   & $84$ \\
Real, all indices (naive pool)  & $0.729$ & not independent  & model $\times$ window & $420$ \\
\midrule
\multicolumn{2}{l}{Lemma~\ref{lem:gap} identity (Pearson $r$)} & $1.000$ & exact (MAD $0.000$) & \\
\bottomrule
\end{tabular}
\end{table}

\begin{table}[H]
\centering
\caption{Per-index window-clustered Spearman correlation between the predicted
penalty and $|R_{\mathrm{future}}-\Rmix|$, with $95\%$ bootstrap intervals and the
number of independent rolling-origin windows. The point estimates are positive for
all ten markets; intervals are wide because each index contributes few windows, so
inference is strongest in aggregate (Table~\ref{tab:results}).}
\label{tab:perindex}
\small
\begin{tabular}{lccc}
\toprule
Index & $\rho$ & $95\%$ CI & Windows \\
\midrule
S\&P 500      & $0.766$ & $[\phantom{-}0.312,\,0.890]$ & $9$ \\
NASDAQ        & $0.649$ & $[\phantom{-}0.179,\,0.869]$ & $8$ \\
Dow Jones     & $0.372$ & $[-0.267,\,0.758]$          & $9$ \\
Russell 2000  & $0.711$ & $[\phantom{-}0.237,\,0.872]$ & $8$ \\
FTSE 100      & $0.670$ & $[\phantom{-}0.109,\,0.850]$ & $9$ \\
DAX           & $0.551$ & $[-0.156,\,0.873]$          & $9$ \\
CAC 40        & $0.659$ & $[\phantom{-}0.233,\,0.849]$ & $9$ \\
Nikkei 225    & $0.519$ & $[\phantom{-}0.241,\,0.703]$ & $9$ \\
Hang Seng     & $0.607$ & $[\phantom{-}0.062,\,0.854]$ & $9$ \\
EURO STOXX 50 & $0.949$ & $[\phantom{-}0.700,\,0.957]$ & $5$ \\
\midrule
All (clustered) & $0.729$ & $[\phantom{-}0.635,\,0.801]$ & $84$ \\
\bottomrule
\end{tabular}
\end{table}

\begin{figure}[H]
\centering
\includegraphics[width=0.7\textwidth]{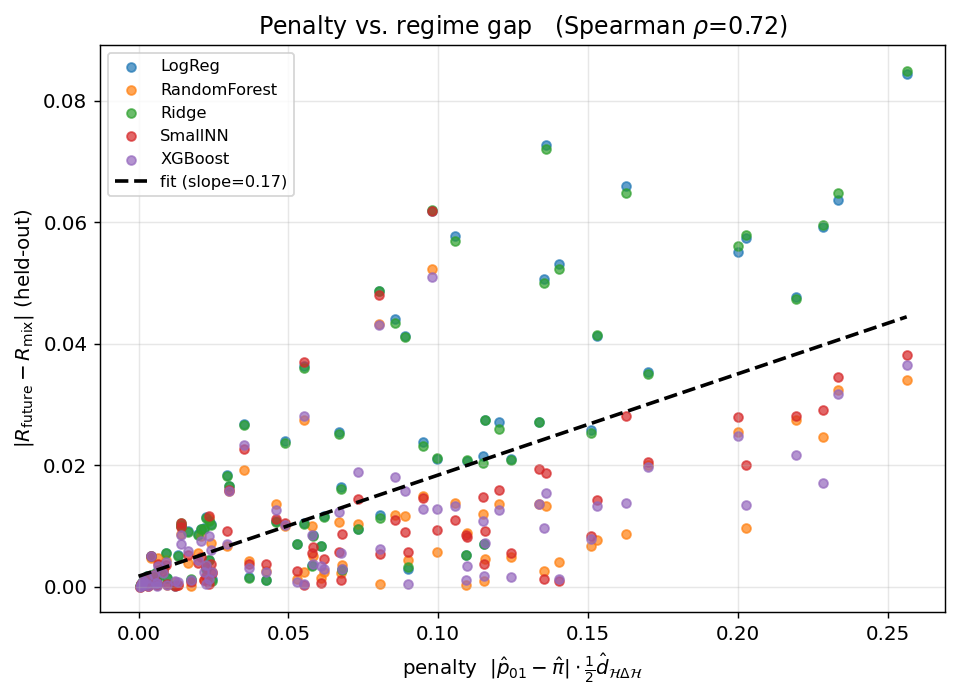}
\caption{Synthetic sweep: predicted penalty versus the magnitude of the realized
regime gap, with points colored by model and a least-squares trend line. The
positive slope ($\rho=0.716$ pooled) holds within every model family.}
\label{fig:synthetic}
\end{figure}

\begin{figure}[H]
\centering
\includegraphics[width=0.6\textwidth]{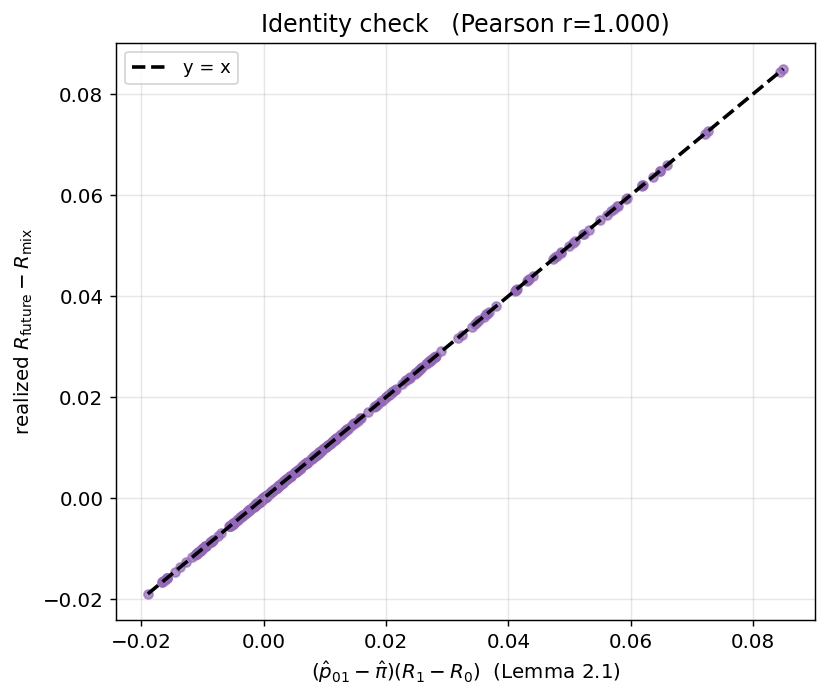}
\caption{Identity check: realized $R_{\mathrm{future}}-\Rmix$ versus the
Lemma~\ref{lem:gap} prediction $(\widehat\pi_{\mathrm{future}}-\widehat\pi)(R_1-R_0)$.
Points lie on the line $y=x$ (Pearson $r=1.000$), confirming the decomposition is
exact.}
\label{fig:identity}
\end{figure}

\begin{figure}[H]
\centering
\includegraphics[width=0.7\textwidth]{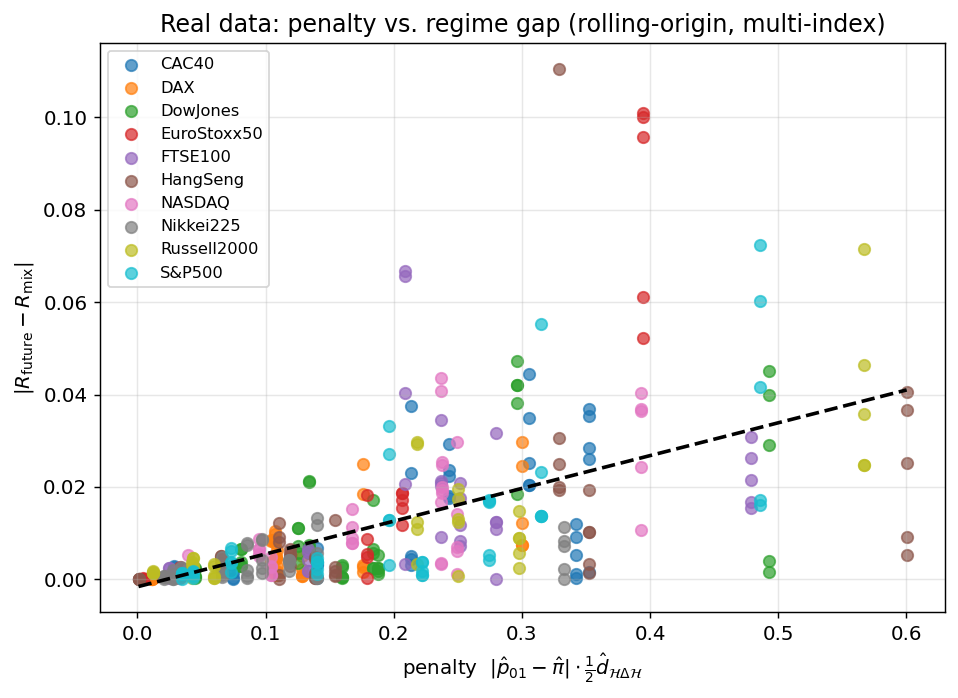}
\caption{Real data (rolling-origin, ten indices): predicted penalty versus the
magnitude of the realized regime gap, one set of points per index. The clustered
correlation is $\rho=0.729$ ($95\%$ CI $[0.635,0.801]$, $84$ windows).}
\label{fig:real}
\end{figure}

\section{Discussion}\label{sec:discussion}

The standard response to a model that fails after deployment is to blame the
model---the architecture was too complex, the regularization was insufficient,
the features were not invariant enough \citep{bousquet2004,zhang2017}. All of these
diagnoses point in the same direction: reduce $\Delta$, the regime sensitivity
term, through better algorithmic design. This paper proves that this diagnosis
is structurally incomplete, and in the worst case, structurally irrelevant.

We establish three results that together shift the burden of explanation from
the model to the environment. First, Lemma~\ref{lem:gap} provides an exact
algebraic decomposition showing that the generalization gap factorizes as
$(p_{01} - \pi) \cdot (R_1 - R_0)$. The model enters only through the second
factor; the first factor is a property of the world. Second, we identify an
explicit escape condition: when the training crisis fraction $\pi$ matches the
true transition probability $p_{01}$, the penalty vanishes identically. Third,
and most critically, we demonstrate that this escape condition is empirically
unreachable. On real equity data, a domain classifier separates calm and crisis
regimes with near-perfect accuracy ($\frac{1}{2}\hat{d}_{\mathcal{H}\Delta\mathcal{H}}
= 0.93 \pm 0.02$), confirming that the feature geometry of crises is highly
distinguishable \citep{bendavid2010,kifer2004}. Yet estimating $p_{01}$ from
training windows and plugging it into the penalty yields no significant
correlation with actual deployment gaps ($\rho \approx 0.084$, 95\% CI contains
zero). The two numbers together tell the story: regimes are obvious in
hindsight, invisible in foresight, and the resulting gap is not the model's
fault.

Theorem~\ref{thm:lower} sharpens this point theoretically. Under pure
calm-state training ($\pi = 0$), we prove a Le Cam minimax lower bound showing
that excess risk of at least $p_{01} \cdot \frac{1}{2}d_{\mathcal{H}\Delta\mathcal{H}}$
is mathematically irreducible, regardless of sample size
\citep{lecam1986,yu1997assouad}. No amount of calm-state data can close this gap.
Under mixed training ($\pi > 0$), the lower bound relaxes to
$\Theta(1/\sqrt{n_{\text{eff}}})$, meaning the learner gradually identifies the
regime structure as crisis samples accumulate. But the practical problem
remains: without a reliable estimate of $p_{01}$, one cannot know whether the
training composition $\pi$ is correctly calibrated. The escape condition exists
in theory; the world denies it in practice.

This reframes the challenge of deployment under non-stationarity. Improving
generalization is not exclusively a machine learning architecture problem. The
internal component $\Delta = \frac{1}{2}d_{\mathcal{H}\Delta\mathcal{H}}$ is
within the algorithm's control—modifiable via regularization, domain-invariant
representations, or invariant risk minimization \citep{arjovsky2019,krueger2021}.
The external component $|p_{01} - \pi|$ depends on the timing of regime
transitions and lies outside the algorithm's direct reach
\citep{hamilton1989}. Optimization can mitigate the impact of a regime shift,
but it cannot predict its arrival. Deployment risk under regime-switching
dynamics is therefore bounded not by nominal sample size, but by the physical
scarcity of historical macro-state transitions.

Consequently, we do not present this framework as an ex ante forecasting tool.
We present it as a mathematically rigorous diagnostic instrument. When a model
fails, the framework answers a specific question: was the failure due to regime
mismatch, or to something else? The penalty computed using the realized future
crisis fraction tracks actual train-to-deployment gaps with Spearman
$\rho = 0.729$ (95\% CI $[0.635, 0.801]$), confirming that the mechanism is
real and the decomposition is exact. This enables post-hoc failure auditing,
analogous to Value-at-Risk frameworks in banking \citep{jorion2007}: one can
stress-test deployment scenarios under hypothetical transition probabilities
and isolate the structural component of risk that no amount of model refinement
can eliminate.

Several limitations define the boundaries of the current framework. The
two-state Markov assumption provides analytical tractability but may not
capture the full complexity of real-world regime dynamics
\citep{hamilton1989,ang2002}. The framework is diagnostic rather than
prescriptive: it identifies where regime risk emerges, but does not guarantee
that modifying training composition will recover lost performance in all
settings. Extending the framework to multiple regimes, continuous state
strength, and prescriptive training strategies remains important future work.
Validating the diagnostic utility in other domains where regime transitions are
consequential---healthcare, cybersecurity, autonomous systems---will further
test the generality of the decomposition \citep{quinonero2008}.

\section{Conclusion}\label{sec:conclusion}

We introduced a regime-aware generalization framework for Markov-switching
distribution shifts. The central contribution is an exact decomposition of
future deployment risk into a standard empirical-complexity term and a
regime-mismatch penalty $|p_{01} - \pi| \cdot \frac{1}{2}d_{\mathcal{H}\Delta\mathcal{H}}$,
extended to $\beta$-mixing data via a spectral-gap-adjusted effective sample
size. A matching Le Cam lower bound proves that under pure calm-state training,
this penalty is irreducible. Empirical validation on ten global equity indices
confirms that the penalty mechanism is real: when computed with the realized
future crisis fraction, it tracks deployment gaps tightly ($\rho = 0.729$).
However, the training-data-only estimator fails completely ($\rho \approx 0.084$),
despite near-perfect regime separability ($\frac{1}{2}\hat{d} = 0.93 \pm 0.02$).
This negative result is itself a finding: deployment risk is governed by the
physical scarcity of regime transitions, not by nominal sample size, and the
escape condition $\pi = p_{01}$---while mathematically identified---is
empirically unreachable from historical windows alone. These results establish
that statistical learning safety under non-stationarity is fundamentally a
regime-timing problem ($p_{01}$) rather than solely an algorithmic optimization
problem ($\Delta \to 0$), and that rigorous post-hoc auditing constitutes a
necessary complement to predictive generalization bounds.
\bibliographystyle{plainnat}
\bibliography{references}

@article{bendavid2010,
  author    = {Shai Ben-David and John Blitzer and Koby Crammer and
               Alex Kulesza and Fernando Pereira and Jennifer Wortman Vaughan},
  title     = {A theory of learning from different domains},
  journal   = {Machine Learning},
  volume    = {79},
  number    = {1--2},
  pages     = {151--175},
  year      = {2010},
  doi       = {https://doi.org/10.1007/s10994-009-5152-4}
}

@inproceedings{mansour2009,
  author    = {Yishay Mansour and Mehryar Mohri and Afshin Rostamizadeh},
  title     = {Domain Adaptation: Learning Bounds and Algorithms},
  booktitle = {Proceedings of the 22nd Annual Conference on Learning Theory
               ({COLT} 2009)},
  year      = {2009},
  address   = {Montr\'{e}al, Canada}
}

@book{vapnik1998,
title={Statistical Learning Theory},
author={Vapnik, Vladimir N.},
year={1998},
publisher={Wiley},
isbn={9780471030034}
}

@incollection{bousquet2004,
title={Introduction to Statistical Learning Theory},
author={Bousquet, Olivier and Boucheron, St{'e}phane and Lugosi, G{'a}bor},
booktitle={Advanced Lectures on Machine Learning},
pages={169--207},
year={2004},
publisher={Springer},
doi={https://doi.org/10.1007/978-3-540-28650-9_8}
}

@book{quinonero2008,
title={Dataset Shift in Machine Learning},
author={Quionero-Candela, Joaquin and Sugiyama, Masashi and Schwaighofer, Anton and Lawrence, Neil D.},
year={2008},
publisher={MIT Press},
isbn={9780262170055}
}

@article{hamilton1989,
title={A New Approach to the Economic Analysis of Nonstationary Time Series and the Business Cycle},
author={Hamilton, James D.},
journal={Econometrica},
volume={57},
number={2},
pages={357--384},
year={1989},
doi={https://doi.org/10.2307/1912559}
}

@article{yu1994,
  author    = {Bin Yu},
  title     = {Rates of Convergence for Empirical Processes of Stationary
               Mixing Sequences},
  journal   = {The Annals of Probability},
  volume    = {22},
  number    = {1},
  pages     = {94--116},
  year      = {1994},
  doi       = {http://www.jstor.org/stable/2244496}
}

@article{davydov1973,
  author    = {Yu.~A. Davydov},
  title     = {Mixing conditions for {M}arkov chains},
  journal   = {Theory of Probability and its Applications},
  volume    = {18},
  number    = {2},
  pages     = {312--328},
  year      = {1973},
  doi       = {https://doi.org/10.1137/1118033}
}

@article{bartlett2002,
  author    = {Peter L. Bartlett and Shahar Mendelson},
  title     = {Rademacher and {G}aussian Complexities: Risk Bounds and
               Structural Results},
  journal   = {Journal of Machine Learning Research},
  volume    = {3},
  pages     = {463--482},
  year      = {2002}
}

@book{mohri2018,
  author    = {Mehryar Mohri and Afshin Rostamizadeh and Ameet Talwalkar},
  title     = {Foundations of Machine Learning},
  edition   = {2nd},
  publisher = {The {MIT} Press},
  address   = {Cambridge, {MA}},
  year      = {2018}
}

@inproceedings{kifer2004,
  author    = {Daniel Kifer and Shai Ben-David and Johannes Gehrke},
  title     = {Detecting Change in Data Streams},
  booktitle = {Proceedings of the 30th International Conference on Very Large
               Data Bases ({VLDB} 2004)},
  pages     = {180--191},
  year      = {2004},
  address   = {Toronto, Canada},
  url       = {http://www.vldb.org/conf/2004/RS5P1.PDF},
  doi       = {10.1016/B978-012088469-8.50019-X}
}

@book{lecam1986,
  author    = {Lucien Le Cam},
  title     = {Asymptotic Methods in Statistical Decision Theory},
  series    = {Springer Series in Statistics},
  publisher = {Springer-Verlag},
  address   = {New York},
  year      = {1986}
}

@incollection{yu1997assouad,
  author    = {Bin Yu},
  title     = {Assouad, {F}ano, and {L}e {C}am},
  booktitle = {Festschrift for Lucien Le Cam},
  editor    = {David Pollard and Erik Torgersen and Grace L. Yang},
  pages     = {423--435},
  publisher = {Springer},
  address   = {New York},
  year      = {1997},
  doi       = {10.1007/978-1-4612-1880-7_29}
}

@article{suarez2024,
  title={Machine Learning for Financial Prediction Under Regime Change Using Technical Analysis: A Systematic Review},
  author={Su{\'a}rez Cetrulo, Andr{\'e}s L. and Quintana, David and Cervantes, Alejandro},
  journal={International Journal of Interactive Multimedia and Artificial Intelligence},
  volume={9},
  number={1},
  pages={137--148},
  year={2024},
  doi={10.9781/ijimai.2023.06.003}
}

@article{shu2024,
  title={Dynamic Asset Allocation with Asset-Specific Regime Forecasts},
  author={Shu, Yizhan and Yu, Chenyu and Mulvey, John M.},
  journal={Annals of Operations Research},
  volume={346},
  pages={285--318},
  year={2024},
  doi={10.1007/s10479-024-06266-0}
}

@inproceedings{xiang2024,
  title={{RSAP-DFM}: Regime-Shifting Adaptive Posterior Dynamic Factor Model for Stock Returns Prediction},
  author={Xiang, Quanzhou and Chen, Zhan and Sun, Qi and Jiang, Rujun},
  booktitle={Proceedings of the Thirty-Third International Joint Conference on Artificial Intelligence},
  pages={6116--6124},
  year={2024},
  doi={10.24963/ijcai.2024/676}
}

@article{zaremba2024,
    title={What drives stock returns across countries? Insights from machine learning models},
    author={Zaremba, Adam and Cakici, Nusret},
    journal={International Review of Financial Analysis},
    volume={96},
    pages={103576},
    year={2024},
    issn={1057-5219},
    doi={10.1016/j.irfa.2024.103576},
    publisher={Elsevier}
}

@article{staehr2024,
  author    = {S{\o}ren Staehr and others},
  title     = {Forecasting stock returns with regime-switching models},
  journal   = {Journal of Financial Economics},
  year      = {2024},
  note      = {Working paper}
}

@inproceedings{zhang2017,
  author    = {Zhang, Chiyuan and Bengio, Samy and Hardt, Moritz and Recht, Benjamin and Vinyals, Oriol},
  title     = {Understanding Deep Learning Requires Rethinking Generalization},
  booktitle = {International Conference on Learning Representations (ICLR)},
  year      = {2017},
  doi       = {10.48550/arXiv.1611.03530}
}

@inproceedings{arjovsky2019,
  author    = {Arjovsky, Martin and Bottou, L{\'e}on and Gulrajani, Ishaan and Lopez-Paz, David},
  title     = {Invariant Risk Minimization},
  booktitle = {arXiv preprint},
  year      = {2019},
  doi       = {
https://doi.org/10.48550/arXiv.1907.02893}
}

@article{krueger2021,
  author    = {Krueger, David and Caballero, Ethan and Jacobsen, Joern-Henrik and Zhang, Amy and Binas, Jonathan and Zhang, Dinghuai and Le Priol, Remi and Courville, Aaron},
  title     = {Out-of-Distribution Generalization via Risk Extrapolation},
  journal   = {International Conference on Machine Learning (ICML)},
  year      = {2021},
  doi       = {10.48550/arXiv.2003.00688}
}

@book{jorion2007,
  author    = {Jorion, Philippe},
  title     = {Value at Risk: The New Benchmark for Managing Financial Risk},
  edition   = {3rd},
  year      = {2007},
  publisher = {McGraw-Hill},
  isbn      = {978-0071464956}
}

@article{ang2002,
  author    = {Ang, Andrew and Bekaert, Geert},
  title     = {International Asset Allocation with Regime Shifts},
  journal   = {Review of Financial Studies},
  volume    = {15},
  number    = {4},
  pages     = {1137--1187},
  year      = {2002},
  doi       = {https://doi.org/10.1093/rfs/15.4.1137}
}
\section*{Acknowledgments}
The author thanks AI for assistance with language polishing
and LaTeX formatting. All intellectual content is solely the author's own.

\appendix
\section{Proofs}\label{app:proofs}

We prove the results that are original to this work or essential to the main
bound. Throughout, $\ell\in[0,1]$ and all expectations are under the stated
distributions. Two standard facts are used without proof:
\begin{itemize}[leftmargin=2em]
\item[(F1)] $\half\dHH(Q,R)\le\dTV(Q,R)$, since every disagreement set
$\{x:h(x)\neq h'(x)\}$ is an event and the supremum over such sets is dominated by
the supremum over all events (Definition~\ref{def:hdh}).
\item[(F2)] The effective-sample-size bound $\neff\ge ng/(\ln(n\Cmu)+2)$ and the
crisis-fraction concentration $|\widehat\pi-\pi|\le\eta_\pi$ are direct
applications of the mixing inequalities of \citet{yu1994} to our two-state chain,
with block length $b=\lceil\ln(n\Cmu)/g\rceil$ chosen so that $\beta(b)\le1/n$.
\end{itemize}

\subsection{Proof of Lemma~\ref{lem:gap} (future-mix identity)}
\begin{proof}
\textbf{Step 1 (expand each risk).} By Definition~\ref{def:trainfuture} and
linearity of expectation,
\[
  \Rfut(f)=p_{00}R_0(f)+p_{01}R_1(f),
  \qquad
  \Rmix(f)=(1-\pi)R_0(f)+\pi R_1(f).
\]

\textbf{Step 2 (subtract).}
\[
  \Rfut(f)-\Rmix(f)
  =\bigl[p_{00}-(1-\pi)\bigr]R_0(f)+\bigl[p_{01}-\pi\bigr]R_1(f).
\]

\textbf{Step 3 (use $p_{00}=1-p_{01}$).} The coefficient of $R_0(f)$ becomes
$(1-p_{01})-(1-\pi)=\pi-p_{01}$, so
\[
  \Rfut(f)-\Rmix(f)=(\pi-p_{01})R_0(f)+(p_{01}-\pi)R_1(f).
\]

\textbf{Step 4 (factor).} Since $\pi-p_{01}=-(p_{01}-\pi)$,
\[
  \Rfut(f)-\Rmix(f)=(p_{01}-\pi)\bigl(R_1(f)-R_0(f)\bigr).\qedhere
\]
\end{proof}

\subsection{Proof of Corollary~\ref{cor:shift} (regime-shift inequality)}
\begin{proof}
\textbf{Step 1 (bound the gap in absolute value).} From Lemma~\ref{lem:gap} and
$|ab|=|a|\,|b|$,
\[
  \Rfut(f)\le\Rmix(f)+|p_{01}-\pi|\,\bigl|R_1(f)-R_0(f)\bigr|.
\]

\textbf{Step 2 (relate the regime risks via a reference hypothesis).} For the
$0/1$ loss write $R_i(f)=\Prob_{x\sim P_i}[f(x)\neq y]$, and let
$h^\star=\arg\min_{h\in\Hc}[R_0(h)+R_1(h)]$. The triangle inequality for
classification disagreement (for $\{0,1\}$-valued $a,b,c$,
$\Prob[a\neq c]\le\Prob[a\neq b]+\Prob[b\neq c]$) gives
\[
  R_1(f)\le R_0(f)
  +\bigl|\Prob_{P_1}[f\neq h^\star]-\Prob_{P_0}[f\neq h^\star]\bigr|
  +\lambda_{01}.
\]

\textbf{Step 3 (bound the middle term by the divergence).} Since
$\Fc\subseteq\Hc$, the disagreement $f\oplus h^\star\in\Hc\Delta\Hc$, hence
\[
  \bigl|\Prob_{P_1}[f\neq h^\star]-\Prob_{P_0}[f\neq h^\star]\bigr|
  \le\sup_{h,h'\in\Hc}\bigl|\Prob_{P_1}[h\neq h']-\Prob_{P_0}[h\neq h']\bigr|
  =\half\dHH(P_1,P_0).
\]

\textbf{Step 4 (symmetrize).} Repeating Steps 2 and 3 with the roles of $P_0,P_1$
exchanged (the term $\lambda_{01}$ is symmetric) yields the two-sided bound
\[
  \bigl|R_1(f)-R_0(f)\bigr|\le\half\dHH(P_1,P_0)+\lambda_{01}.
\]

\textbf{Step 5 (combine).} Substituting Step 4 into Step 1,
\[
  \Rfut(f)\le\Rmix(f)+|p_{01}-\pi|\bigl(\half\dHH(P_1,P_0)+\lambda_{01}\bigr),
\]
and $\lambda_{01}=0$ gives the realizable form.
\end{proof}

\subsection{Proof of Theorem~\ref{thm:dhh} (mixing-aware convergence)}
\begin{proof}
\textbf{Step 1 (i.i.d.\ base bound).} For independent samples of size $m'$ per
domain and a discriminator class of VC dimension $D$ \citep{vapnik1998}, uniform VC deviation
\citep{kifer2004,bendavid2010} gives, with probability $\ge1-\delta$,
\[
  \dHH(P_0,P_1)\le\dhatHH(U_0,U_1)
  +4\sqrt{\frac{D\log(2m')+\log(2/\delta)}{m'}}.
\]
Apply this with $D=\mathrm{VCdim}(\Hc\Delta\Hc)$.\footnote{A standard result 
\citep[Lemma~3.5]{mohri2018} gives $\mathrm{VCdim}(\Hc\Delta\Hc) \leq 2d \log_2(2d) = O(d\log d)$. 
For simplicity, we state the bound with $2d$; the logarithmic factor does not affect the asymptotic rate.} 
For readability, we use the conservative simplification $D \leq 2d$ (up to the logarithmic factor).

\textbf{Step 2 (pay for dependence by blocking; the novel step).} The $2m'$
feature points are $\beta$-mixing \citep{yu1994}. Partition each domain sample into blocks of
length $b=\lceil\ln(m'\Cmu)/g\rceil$, so that $\beta(b)\le1/m'$; the
\[
  \meff\ \ge\ \frac{m'g}{\ln(m'\Cmu)+2}
\]
blocks act as approximately independent draws. The bound of Step~1 then holds with
$m'$ replaced by $\meff$, the extra $\beta(b)\le1/m'$ absorbed into constants.

\textbf{Step 3 (substitute and halve).} Using
$1/\meff\le(\ln(m'\Cmu)+2)/(m'g)$ in the Step-1 root and dividing by $2$,
\[
  \half\dHH(P_0,P_1)\le\half\dhatHH(U_0,U_1)
  +\underbrace{2\sqrt{\frac{\bigl(2d\log(2m')+\log(2/\delta)\bigr)\bigl(\ln(m'\Cmu)+2\bigr)}{m'g}}}_{=:~\eta_d}.
  \qedhere
\]
\end{proof}

\subsection{Proof of Theorem~\ref{thm:main} (main bound)}
\begin{proof}
\textbf{Step 1 (future to training population).} By Corollary~\ref{cor:shift},
\[
  \Rfut(f)\le\Rmix(f)+|p_{01}-\pi|\bigl(\half\dHH(P_1,P_0)+\lambda_{01}\bigr).
\]

\textbf{Step 2 (population to sample).} The Rademacher bound for stationary
$\beta$-mixing sequences \citep{mohri2018,yu1994} gives, with probability $\ge1-\delta$ and uniformly in
$f$,
\[
  \Rmix(f)\le\Rhat_S(f)+2\Rhat_S(\Lc_\Fc)+3\sqrt{\frac{\ln(2/\delta)}{2\neff}}.
\]

\textbf{Step 3 (explicit constant).} By (F2) and
Corollary~\ref{cor:complexity},
\[
  3\sqrt{\frac{\ln(2/\delta)}{2\neff}}\le\Lambda(n,\delta).
\]

\textbf{Step 4 (chain).} Combining Steps 1 through 3,
\[
  \Rfut(f)\le\Rhat_S(f)+2\Rhat_S(\Lc_\Fc)+\Lambda(n,\delta)
  +|p_{01}-\pi|\bigl(\half\dHH(P_1,P_0)+\lambda_{01}\bigr).
\]
The only stochastic step is Step~2, so the bound holds with probability
$\ge1-\delta$.
\end{proof}

\subsection{Proof of Theorem~\ref{thm:lower} (lower bound at $\pi=0$)}
\begin{proof}
\textbf{Step 1 (construction).} Let the feature space be $\{x^\star\}\cup C$ with
\[
  \Prob_{P_0}(x^\star)=0,\qquad \Prob_{P_1}(x^\star)=2\rho,
\]
the remaining mass on $C$. Labels on $C$ are deterministic and identical across
worlds; on $x^\star$ the label is $\mathrm{Bernoulli}(\tfrac12\pm\beta)$ in worlds
$a,b$. Let $\Hc$ contain $h_+,h_-$ that agree on $C$ and disagree on $x^\star$, so
$h_+\oplus h_-=\one[x=x^\star]$.

\textbf{Step 2 (the construction realizes $\half\dHH=\rho$).} The only detectable
distinguishing set is $\{x^\star\}$, with frequency $2\rho$ under $P_1$ and $0$
under $P_0$; hence
\[
  \half\dHH(P_1,P_0)=\tfrac12\,|2\rho-0|=\rho.
\]

\textbf{Step 3 (future excess on $x^\star$).} Under $\Pfut$ the point $x^\star$
carries mass $p_{01}\cdot 2\rho$. For a learner predicting $1$ on $x^\star$ with
probability $q$, the per-unit-mass excess over the world-optimal rule is
$2\beta(1-q)$ in world $a$ and $2\beta q$ in world $b$.

\textbf{Step 4 (the case $\pi=0$).} When $\pi=0$, training never contains
$x^\star$, so the two worlds induce identical training laws ($\TV=0$): the learner
cannot distinguish them. Taking deterministic opposite labels
($\beta\to\tfrac12$) and the minimax choice $q=\tfrac12$ \citep{lecam1986,yu1997assouad},
\[
  \max_{w\in\{a,b\}}\E[\text{excess}]
  \ \ge\ p_{01}(2\rho)\cdot\tfrac12
  \ =\ p_{01}\,\rho
  \ =\ p_{01}\cdot\half\dHH(P_1,P_0),
\]
uniformly in $n$.

\textbf{Step 5 (the case $\pi>0$).} The expected number of revealing
$x^\star$-points in training is $\neff\,\pi\cdot 2\rho$, so by Pinsker's
inequality $\TV\le 2\beta\sqrt{2\neff\pi\rho}$. Optimizing $\beta$ subject to
$\TV\le\tfrac12$ gives a worst-case excess of order
\[
  p_{01}\sqrt{\frac{\rho}{\neff\,\pi}}\ =\ \Theta\!\bigl(1/\sqrt{\neff}\bigr),
\]
which vanishes with $n$ and is therefore not matched by an $n$-uniform constant.
\end{proof}

\subsection{Proof of Proposition~\ref{prop:cert} (certification cost)}
\begin{proof}
\textbf{Step 1 (a valid certificate dominates the worst future).} Valid certificate $U(S)$ should must hold against all possible future market condition. This means, there is no certainty of future regime,  so $U(S)$ covers the least favorable $(p_{01})$ case in the confidence set.Using
$\inf_g\Rfut(g)\le\Rhat_S(f)+\Lambda(n,\delta)$ and dropping the nonnegative
$2\Rhat_S(\Lc_\Fc)$ and $\lambda_{01}$, the residual satisfies
\[
  U(S)-\Rhat_S(f)-\Lambda(n,\delta)\ \ge\ |p_{01}-\pi|\,\Delta.
\]

\textbf{Step 2 (replace $\pi$ by its estimate).} By the reverse triangle
inequality and the concentration bound (F2),
\[
  |p_{01}-\pi|\ \ge\ |p_{01}-\widehat\pi|-|\widehat\pi-\pi|
  \ \ge\ |p_{01}-\widehat\pi|-\eta_\pi.
\]
Combining the two steps gives the claim. When $\pi=p_{01}$, Lemma~\ref{lem:gap}
makes the realized gap exactly zero, so the bound is a property of certificates,
not of realized risk.
\end{proof}

\subsection{Diagnostic: Domain Classifier and Training-Only Penalty}
\label{sec:diagnostics}
\begin{table}[H]
\centering
\caption{Domain classifier performance and training-only penalty correlation}
\label{tab:diagnostics}
\small
\begin{tabular}{lccc}
\toprule
Setting & $\frac{1}{2}\hat{d}_{\mathcal{H}\Delta\mathcal{H}}$ (mean $\pm$ std) & $\rho(|\hat{p}_{01}-\hat{\pi}|\cdot\frac{1}{2}\hat{d},\ |\text{gap}|)$ & 95\% CI \\
\midrule
Synthetic & $0.60 \pm 0.23$ & $0.716$ & $[0.635,\,0.801]$ \\
Real data (HMM regimes) & $0.93 \pm 0.02$ & $0.084$ & $[-0.097,\,0.272]$ \\
\bottomrule
\end{tabular}
\end{table}
\footnote{The synthetic penalty has a strong correlation of (0.716), this confirms the mechanism works under ideal conditions. Real data achieved almost perfect regime separation (0.93) but the correlation is near zero (0.084), proving that $p_{01}$ estimation is the fundamental bottleneck rather than regime detection.}

\end{document}